\theoremstyle{definition}
\newtheorem{assumption}{Assumption} %
\newtheorem{proposition}{Proposition}
\newtheorem{lemma}{Lemma}
\definecolor{c6}{HTML}{95baa6}
\definecolor{c7}{HTML}{76a2bb}
\newcommand{\bl}[1]{\textcolor{c7}{#1}}
\definecolor{c5}{HTML}{b71a3b}
\newcommand{\myred}[1]{\textcolor{c5}{#1}}
\definecolor{mycolor}{HTML}{a7caea} 
\definecolor{mygreen}{HTML}{5B9479}
\definecolor{myblue}{HTML}{4b8dbc}
\definecolor{myyellow}{HTML}{E2CD89}
\newcommand{\mygreen}[1]{\textcolor{mygreen}{#1}}
\newcommand{\myyellow}[1]{\textcolor{myyellow}{#1}}
\definecolor{cvprblue}{rgb}{0.21,0.49,0.74}
\title{
\textit{\bl{Mo}\mygreen{D}\myred{ES}}: Accelerating Mixture-of-Experts \bl{M}ultim\bl{o}dal Large Language Models via \mygreen{D}ynamic \myred{E}xpert \myred{S}kipping
\vspace{-0.2in}
}
\author{
Yushi Huang\textsuperscript{1}, Zining Wang\textsuperscript{2}, Zhihang Yuan\textsuperscript{3}\thanks{Correspondence to: Zhihang Yuan (\texttt{hahnyuan@gmail.com}), Jun Zhang (\texttt{eejzhang@ust.hk}).}, Ruihao Gong\textsuperscript{2}, Yifu Ding\textsuperscript{2}, \\
Jinyang Guo\textsuperscript{2}, Xianglong Liu\textsuperscript{2}, Jun Zhang\textsuperscript{1}\footnote[1]{}\\
{\small \textsuperscript{1}Hong Kong University of Science and Technology \quad \textsuperscript{2}Beihang University \quad \textsuperscript{3}Peking University}\\
\vspace{-0.3in}
}
\begin{document}
\maketitle
\begin{abstract}
Mixture-of-Experts (MoE) Multimodal large language models (MLLMs) excel at vision–language tasks, but they suffer from high computational inefficiency. To reduce inference overhead, expert skipping methods have been proposed to deactivate redundant experts based on the current input tokens. However, we find that applying these methods—originally designed for unimodal large language models (LLMs)—to MLLMs results in considerable performance degradation. This is primarily because such methods fail to account for the heterogeneous contributions of experts across MoE layers and modality-specific behaviors of tokens within these layers. Motivated by these findings, we propose MoDES, the first training-free framework that adaptively skips experts to enable efficient and accurate MoE MLLM inference. It incorporates a globally-modulated local gating (GMLG) mechanism that integrates global layer-wise importance into local routing probabilities to accurately estimate per-token expert importance. A dual-modality thresholding (DMT) method is then applied, which processes tokens from each modality separately, to derive the skipping schedule. To set the optimal thresholds, we introduce a frontier search algorithm that exploits monotonicity properties, cutting convergence time from several days to a few hours. Extensive experiments for 3 model series across 13 benchmarks demonstrate that MoDES far outperforms previous approaches. For instance, when skipping 88\% experts for Qwen3-VL-MoE-30B-A3B-Instruct, the performance boost is up to \textbf{10.67\%} (97.33\% vs. 86.66\%). Furthermore, MoDES significantly enhances inference speed, improving the prefilling time by \textbf{2.16$\times$} and the decoding time by \textbf{1.26$\times$}. Our code is available at \url{https://github.com/ModelTC/MoDES}.
\end{abstract}
    
\section{Introduction}
\label{sec:intro}

\begin{figure}[!ht]
   \centering
    \setlength{\abovecaptionskip}{0.2cm}
   \begin{minipage}[b]{\linewidth}
       \begin{minipage}[b]{0.49\linewidth}
            \centering
            \begin{subfigure}[tp!]{\textwidth}
            \centering
            \includegraphics[width=\linewidth]{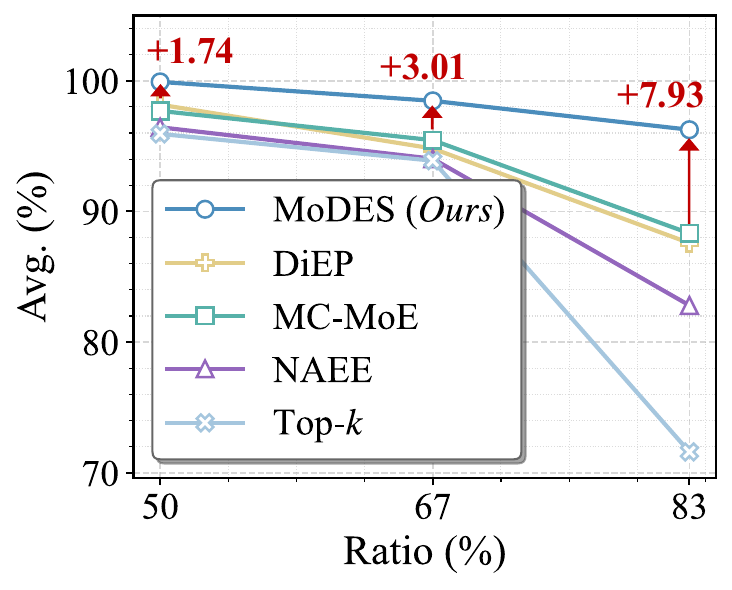}
            \end{subfigure}
       \end{minipage}
       \begin{minipage}[b]{0.49\linewidth}
            \centering
            \begin{subfigure}[tp!]{\linewidth}
            \centering
            \includegraphics[width=\linewidth]{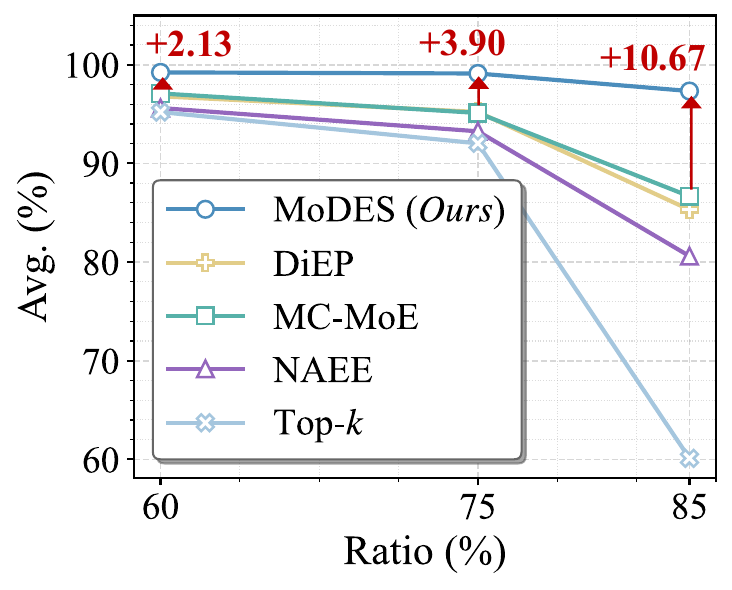}
            \end{subfigure}
       \end{minipage}
   \end{minipage}
   \vspace{-0.1in}
    \caption{Average performance (\%) \emph{vs.} \textit{expert skipping} ratios (\%) across different models~\cite{wang2025internvl3, qwen3_vl_moe_doc, team2025kimi} and methods~\cite{bai2025diep, huang2024mixture, lu2024not} on 13 benchmarks (as detailed in Sec.~\ref{sec:imple}). The \textit{left} subfigure is for Kimi-VL-A3B-Instruct~\cite{team2025kimi} and the \textit{right} subfigure  is for Qwen3-VL-MoE-30B-A3B-Instruct~\cite{qwen3_vl_moe_doc}.}
    \label{fig:teaser}
    \vspace{-0.2in}
\end{figure}
Multimodal large language models (MLLMs)~\citep{radford2021learning,vteam2025glm45vglm41vthinkingversatilemultimodal} have become a dominant paradigm for vision-language understanding tasks, showing remarkable performance in integrating text, images, and videos. However, as the scale of models keeps increasing to handle richer data and more complex tasks, they face significant computational bottlenecks during inference. For instance, Qwen2-VL~\cite{wang2024qwen2} with 72B parameters only achieves $<$10 tokens/s when processing a 4K-token input on 2$\times$A100 GPUs. This is because each token requires computations with all model parameters. The mixture-of-experts (MoE)~\cite{shazeer2017outrageouslylargeneuralnetworks} architecture has emerged as an effective solution to reduce the cost of large-scale MLLMs. By sparsely activating partial parameters (\emph{i.e.}, selected expert networks) for each token, MoE MLLMs~\cite{team2025kimi, qwen3_vl_moe_doc} decouple the factor of model size from computational costs. This design offers substantial computational savings without compromising performance~\cite{jiang2024mixtralexperts,lin2024moe}.

Nevertheless, MoE models typically struggle with suboptimal expert utilization~\cite{lu2024not,jin2024moeacceleratingmixtureofexpertsmethods} due to a fixed number of activated experts for all tokens, which can incur significant inference inefficiency~\cite{jin2024moeacceleratingmixtureofexpertsmethods,yue2024ada,lu2024not}. Recent \textit{expert skipping} methods~\citep{huang2024mixture, lu2024not, bai2025diep} thus propose to skip redundant experts \emph{w.r.t.} current tokens to accelerate inference. However, applying these methods to MoE MLLMs leads to a significant drop in accuracy. For example, as shown in Fig.~\ref{fig:teaser}, skipping 83\% of the experts in previous methods~\cite{lu2024not,bai2025diep,huang2024mixture} during inference results in accuracy drops of over 10\%.

To solve the problem, we first make in-depth analyses and obtain two key insights overlooked before: (\textit{i}) The contributions of experts to the model outputs vary significantly across layers. Specifically, experts in shallow layers play far more critical roles than those in deeper layers. However, prior works~\citep{huang2024mixture, lu2024not, bai2025diep} only consider \textit{intra-layer} information (\emph{e.g.}, Eq.~(\ref{eq:logits})) to develop skipping schedules. (\textit{ii}) Tokens of different modalities (\emph{i.e.}, text and vision) exhibit distinct behaviors as they pass through experts, and experts have a larger effect on updating text tokens. Yet prior works mainly study unimodal LLMs~\cite{kim2023mixture} and do not account for this modality gap in MLLMs. These observations underscore the need for a modality-specific \textit{expert skipping} method that explicitly models layer-specific contributions.

To this end, we introduce \textit{MoDES (Multimodal Dynamic Expert Skipping)}, the first accurate and efficient \textit{expert skipping} framework tailored for MoE MLLMs. In response to the first insight, we propose a \textit{globally-modulated local gating (GMLG)} mechanism, which combines global layer-specific importance with local routing probabilities to construct expert importance scores. The global importance is obtained via offline calibration with no inference-time overhead. Then, we introduce a \textit{dual-modality thresholding (DMT)} method which skips redundant experts whose importance scores for the current token fall below the threshold corresponding to the token’s modality. This modality-specific treatment considerably enhances the performance of \textit{expert skipping} for MLLMs. To determine the optimal thresholds, we further propose a \textit{frontier search} algorithm on a given search space. This search method leverages monotonicity properties of the performance loss and efficiency \emph{w.r.t.} thresholds, reducing the search time from more than $2$ days to less than $2$ hours for models with tens of billions of parameters without compromising performance.

To demonstrate the effectiveness of our method, we conduct extensive experiments on 3 MLLM families across 13 image and video understanding benchmarks. As shown in Fig.~\ref{fig:teaser}, the results indicate that MoDES consistently surpasses state-of-the-art (SOTA) methods. Notably, with extremely high expert skipping ratios ($>$80\%), MoDES achieves \textbf{7.93-10.67\%} performance enhancements compared with baselines while retaining $>$95\% accuracy of original models. Moreover, our MoDES yields a significantly \textbf{2.03$\times$} speedup in prefilling and a \textbf{1.24$\times$} speedup in decoding for Qwen3-VL-MoE-30B-A3B-Instruct~\cite{qwen3_vl_moe_doc}.

\section{Related Work}
\label{sec:related_work}

\noindent\textbf{Multimodal large language models.} Multimodal Large language models (MLLMs)~\cite{li2024llava,liu2023llava,bai2023qwenvl}, which build upon the success of large language models (LLMs)~\citep{achiam2023gpt, chiang2023vicuna, li2024llama, bai2023qwen}, have become a dominant paradigm for vision-language tasks~\citep{chen2024internvl, huang2024ffaa, wang2024qwen2, li2024mini, ataallah2024minigpt4, li2024llava}. However, as MLLMs~\cite{liu2024llava1.5, li2024llava-ov, hu2024mqt-llava} advance to handle higher resolutions and more video frames, the escalating number of visual tokens creates a severe computational bottleneck. Current advanced MLLMs~\citep{wang2025internvl3,team2025kimi, lin2024moe, qwen3_vl_moe_doc} adopt the mixture-of-experts (MoE)~\cite{fedus2022switch} architecture to reduce computational costs by processing each token with a subset of expert networks. Despite this, computation between tokens and multiple activated experts still incurs substantial overhead~\cite{liu2025netmoe, Dhasade_2025}.

\noindent\textbf{Efficient MoE.} Existing works on efficient MoE models can be categorized into training-aware and training-free approaches. Training-aware methods enhance routing balance and expert utilization during training~\citep{yue2024ada, guo2024dynamic, bai2025diep, wu2025routingexpertslearningroute}, but they necessitate costly retraining and extensive data access. In contrast, training-free techniques~\cite{wu2025acceleratingmultimodallargelanguage,gong2024llmcbenchmarkinglargelanguage,lv2025llmcbenchmarkingvisionlanguagemodel} enable lightweight efficiency enhancement without modifying the training pipeline, including quantization for parameter compression~\citep{duanmu2025mxmoe, kim2023mixture} and pruning for structural sparsity~\citep{lee2024stun, xie2024moe}. Owing to the modular and sparse nature of MoE, a new line of research—\textit{expert skipping}—has emerged, which dynamically bypasses redundant experts~\citep{bai2025diep, lu2024not, huang2024mixture} to speed up inference. Among these studies, Lu \emph{et al.}~\citep{lu2024not} utilize dynamic \textit{expert skipping} based on expert routing probabilities. MC-MoE~\citep{huang2024mixture} further integrates an attention-aware expert protection approach during skipping and combines mixed-precision quantization for expert compression. Additionally, DiEP~\citep{bai2025diep} introduces a differentiable expert pruning framework with adaptive \textit{expert skipping}, which jointly considers routing probabilities and expert similarity. However, these skipping methods are primarily developed for text-only LLMs~\cite{jiang2024mixtralexperts}, which limits their scalability to complex multimodal architectures. In contrast, our training-free \textit{expert skipping} framework focuses on advanced MoE MLLMs, achieving efficient inference without sacrificing cross-modal understanding.

\section{Preliminaries}\label{sec:preliminaries}
\noindent\textbf{Architecture of MLLM.} A typical MLLM~\citep{wang2024qwen2vl,bai2025qwen25vl,chen2024internvl2} comprises three core components: A visual encoder, a projector, and an LLM backbone. The visual encoder first extracts visual tokens from an image or video. The projector then aligns these tokens with the LLM's text embedding space. Finally, the LLM backbone, a stack of transformer layers~\cite{vaswani2017transformer} composed of self-attention and feed-forward networks (FFNs), processes the combined visual and text tokens to generate responses.

\noindent\textbf{Mixture-of-Experts (MoE).} %
The advanced MLLMs~\cite{team2025kimi, zhu2025internvl3} employ Mixture-of-Experts (MoE)~\cite{huang2024mixture} layers as their FFNs of the LLM backbones. This structure can be viewed as a conditional computation module composed of multiple parallel experts. Formally, let the $l$-th MoE layer contain $M$ experts, \emph{i.e.}, $\{\texttt{Expert}^{(l)}_1, \dots, \texttt{Expert}^{(l)}_M\}$, each of which is implemented as a multi-layer perception (MLP). Given an input token representation $\mathbf{x}^{(l)} \in \mathbb{R}^{d}$ ($d$ denotes hidden dimension), a lightweight router predicts a set of routing logits $\mathbf{r}^{(l)} = \{r^{(l)}_1,\dots,r^{(l)}_M\}$. These logits are then normalized into routing probabilities through a softmax operation:
\begin{equation}
\label{eq:logits}
    \pi_m^{(l)} = \frac{\exp(r_m^{(l)})}{\sum^{M}_{\hat{m}=1}\exp(r_{\hat{m}}^{(l)})},
\end{equation}
where $\pi^{(l)}_m$ reflects the contribution of $\texttt{Expert}^{(l)}_m$. To ensure sparse activation, only a subset of experts is executed. Let $\mathcal{S}^{(l)}$ denote the indices of the top-$k$ experts with the largest routing probabilities. The output $\mathbf{y}^{(l)}$ of the MoE layer is obtained through a weighted aggregation:
\begin{equation}
    \label{eq:aggregation}
\mathbf{y}^{(l+1)}=\sum_{m\in\mathcal{S}^{(l)}}\pi_m^{(l)}\cdot\texttt{Expert}^{(l)}_m(\mathbf{x}^{(l)}).
\end{equation}
This formulation allows the model to scale the number of parameters independently of the active computation cost.

\section{Motivation}\label{sec:motivation}
Existing studies~\cite{lu2024not,huang2024mixture,bai2025diep} have found that not every selected expert provides essential contributions for tokens. They thus propose to skip the computation of unimportant experts to improve inference efficiency. However, they focus on text-only LLMs~\cite{jiang2024mixtralexperts}. In this study, we have identified that directly adapting these methods to MoE MLLMs~\cite{team2025kimi, qwen3_vl_moe_doc} overlooks two key factors: Global contribution (Sec.~\ref{sec:motivation_global}) and modality gap (Sec.~\ref{sec:motivation_modality}). Both factors significantly affect the performance and efficiency of \textit{expert skipping} in MLLMs.

\subsection{Global Contribution Disregard}\label{sec:motivation_global}
Recent skipping strategies~\cite{huang2024mixture, lu2024not,bai2025diep} rely on the \emph{local} routing probabilities (Eq.~(\ref{eq:logits})) to determine the skipping schedule of the $l$-th layer, reflecting only input-dependent gating within a single layer. Such layer-agnostic rules ignore the \emph{global} contribution (\emph{i.e.}, impact on final outputs) imbalance of experts across different layers. Empirically, as shown in Fig.~\ref{fig:motivation_global}, we observe that when reducing the value of $k$ for expert routing, shallower layers incur much severe performance drops than those of deeper layers. This may result from that, relative to the error of deeper layers, errors introduced in shallow layers are amplified by subsequent layers~\cite{huang2025determininglayerwisesparsitylarge}, leading to a significant \textit{error explosion}.  Accordingly, the aforementioned layer-independent \textit{expert skipping} strategies~\cite{huang2024mixture, lu2024not, bai2025diep} risk excessive skipping at shallow layers, which are critical to final outputs, and \emph{vice versa} for deep layers. 
\begin{figure}[!ht]
\vspace{-0.1in}
   \centering
    \setlength{\abovecaptionskip}{0.2cm}
   \begin{minipage}[b]{\linewidth}
       \begin{minipage}[b]{0.326\linewidth}
            \centering
            \begin{subfigure}[tp!]{\textwidth}
            \centering
            \subcaption{ChartQA~\cite{masry2022chartqa}}
            \includegraphics[width=\linewidth]{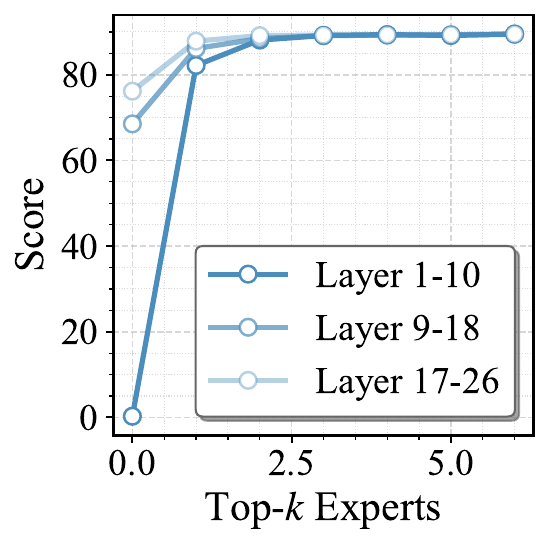}
            \end{subfigure}
       \end{minipage}
       \begin{minipage}[b]{0.326\linewidth}
            \centering
            \begin{subfigure}[tp!]{\linewidth}
            \centering
            \subcaption{MME~\cite{fu2023mme}}
            \includegraphics[width=\linewidth]{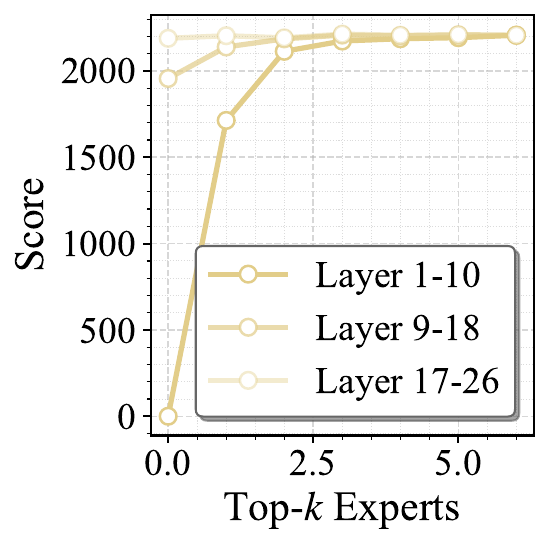}
            \end{subfigure}
       \end{minipage}
       \begin{minipage}[b]{0.326\linewidth}
            \centering
            \begin{subfigure}[tp!]{\linewidth}
            \centering
            \subcaption{VideoMMMU~\cite{hu2025videommmuevaluatingknowledgeacquisition}}
            \includegraphics[width=\linewidth]{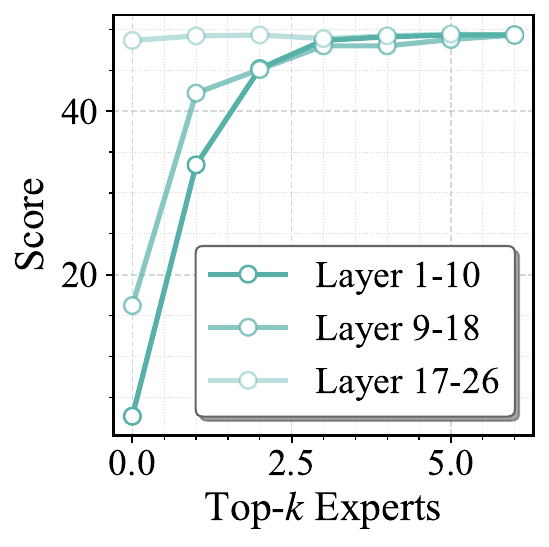}
            \end{subfigure}
       \end{minipage}
   \end{minipage}
   \vspace{-0.1in}
     \caption{Performance on image (\emph{i.e.}, (a)-(b)) and video (\emph{i.e.}, (c)) understanding tasks across various numbers of top-$k$ routed experts applied to different layer ranges for Kimi-VL-A3B-Instruct~\cite{team2025kimi}. The model has 64 routed experts for each FFN within the $1$-st to the $26$-th layers, and sets $k=6$ by default.}
    \label{fig:motivation_global}
    \vspace{-0.15in}
\end{figure}

\underline{\textit{Insight (i):}}  The observation yields a core design principle: With higher global contributions, experts in shallow-critical layers should be preserved; while experts in deeper, less influential ones can be skipped more aggressively.

\subsection{Modality Gap Matters}\label{sec:motivation_modality} %
Focusing on \textit{expert skipping} for MLLMs, we further examine the properties of modality-specific tokens with respect to the FFN layers. We first visualize the FFN input representations via t-SNE in Fig.~\ref{fig:motivation_modality} (\textit{Left}), which reveals a consistent distributional gap between text and vision tokens across layers. To quantify the effect of this modality disparity, we compute the cosine similarity between token representations before and after the FFNs. As shown in Fig.~\ref{fig:motivation_modality} (\textit{Middle}), FFNs induce a smaller effect on vision tokens (\emph{i.e.}, higher similarity for tokens pre- \emph{vs.} post-FFN), whereas text tokens undergo substantially larger updates. By tracking the angles between tokens and FFN weights in Fig.~\ref{fig:motivation_modality} (\textit{Right}), we attribute this phenomenon to their geometry: Vision tokens are more orthogonal to FFN weights (angles$\rightarrow90^\circ$), which alleviates the magnitude of their updates.

\begin{figure}[!ht]
\vspace{-0.1in}
   \centering
    \setlength{\abovecaptionskip}{0.2cm}
   \begin{minipage}[b]{\linewidth}
       \begin{minipage}[b]{0.326\linewidth}
            \centering
            \begin{subfigure}[tp!]{\linewidth}
            \centering
            \includegraphics[width=\linewidth]{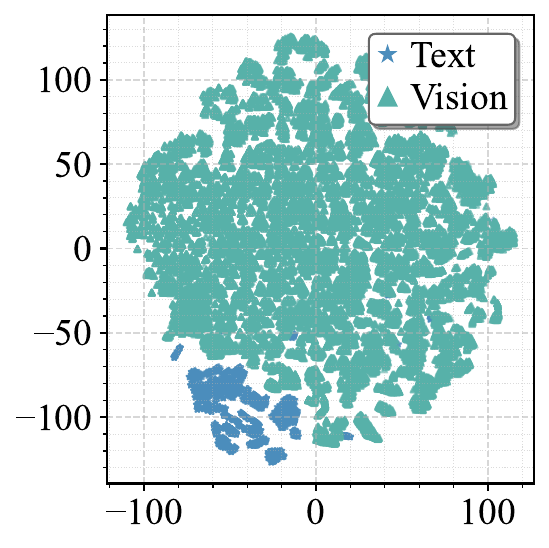}
            \end{subfigure}
       \end{minipage}
       \begin{minipage}[b]{0.326\linewidth}
            \centering
            \begin{subfigure}[tp!]{\linewidth}
            \centering
            \includegraphics[width=\linewidth]{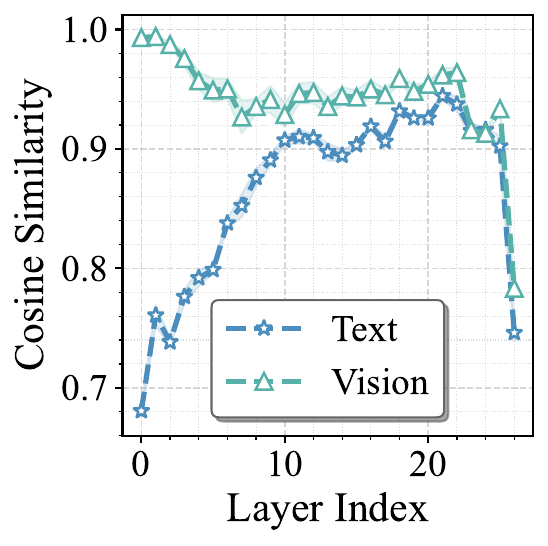}
            \end{subfigure}
       \end{minipage}
       \begin{minipage}[b]{0.326\linewidth}
            \centering
            \begin{subfigure}[tp!]{\linewidth}
            \centering
            \includegraphics[width=\linewidth]{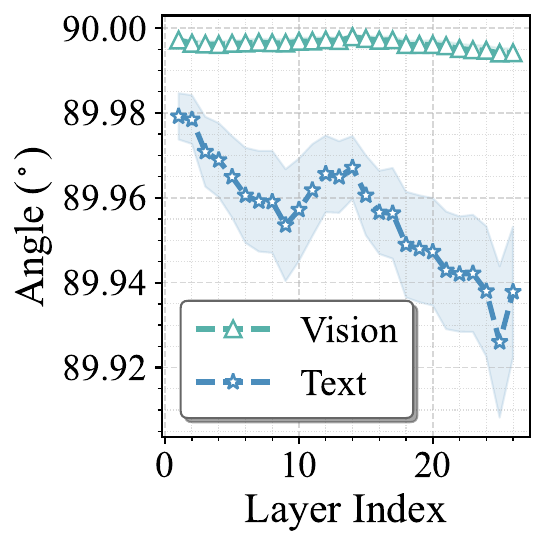}
            \end{subfigure}
       \end{minipage}
   \end{minipage}
   \vspace{-0.1in}
     \caption{(\textit{Left}) t-SNE~\cite{tsne} visualization of pre-FFN text/vision tokens across \textit{all} layers. (\textit{Middle}) Cosine similarity between pre-FFN and post-FFN text/vision tokens across layers. (\textit{Right}) Angle between text/vision tokens and weights across different FFN layers. Here, GQA~\cite{hudsom2019gqa} dataset is used as the model inputs, and the model is employed the same as that in Fig.~\ref{fig:motivation_global}.}
    \label{fig:motivation_modality}
    \vspace{-0.1in}
\end{figure}

\underline{\textit{Insight (ii):}} In a word, tokens from different modalities differ, and the magnitudes of updates by FFNs for tokens also vary across modalities. Intuitively, when deciding whether to skip the experts \emph{w.r.t.} the current token, we should account for these modality-specific differences. In the following, a modality-aware skipping policy is proposed for multimodal expert routing.

\begin{figure*}[!ht]
    \vspace{-0.3in}
    \centering
    \includegraphics[width=0.9\linewidth]{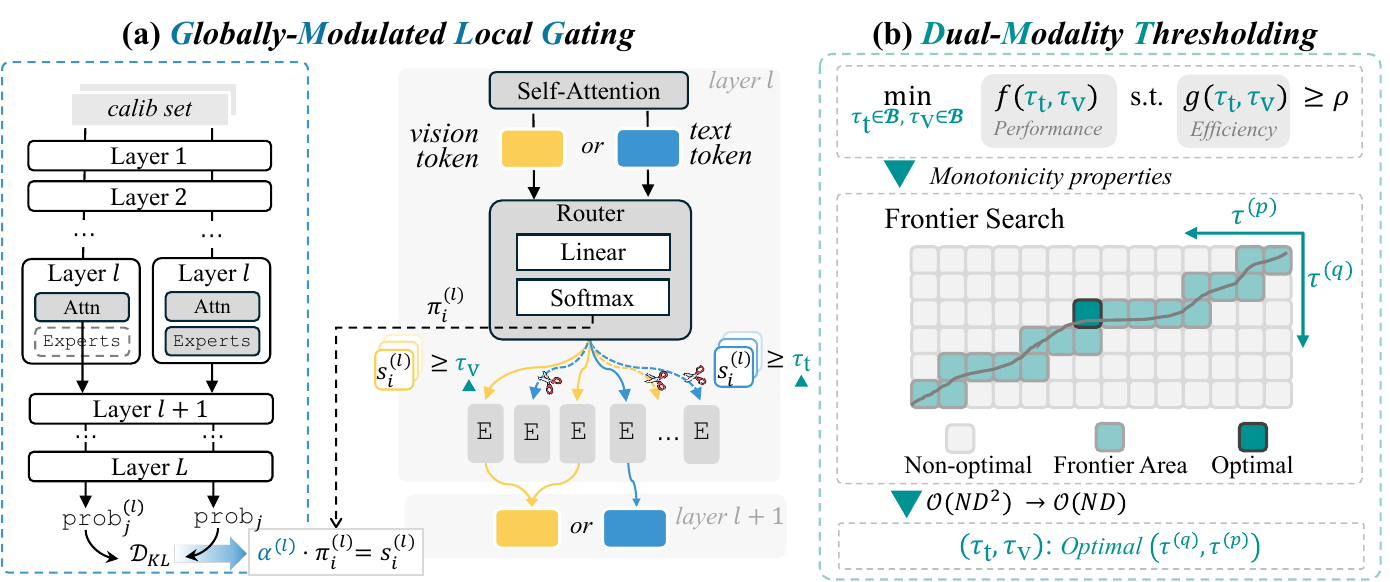}
    \vspace{-0.1in}
     \caption{Overview of \textit{MoDES}. At inference, use a text token (\emph{e.g.}, $\bl\blacksquare$ above) at the $l$-th FFN layer as an example. (\textit{a}) We compute importance scores $s^{(l)}_i$ ($i\in\{2, 4, M\}$) by combining the offline-calibrated globally-modulated factor $\bl{\alpha^{(l)}}$ with the local routing probability $\pi^{(l)}_i$. These scores evaluate the top-$k$ ($k=3$) routed experts for token $\bl\blacksquare$. (\textit{b}) We then apply a modality-specific threshold—$\mygreen{\tau_{\text{t}}}$ for text and $\mygreen{\tau_{\text{v}}}$ for vision—found by an efficient and effective \textit{frontier search}. Experts with scores below the threshold are skipped. This method significantly reduces computation while preserving performance for MoE MLLMs. ``\texttt{E}'' and ``\textit{calib set}'' denote the expert and $\mathcal{C}$ (Eq.~(\ref{eq:alpha})).}
    \label{fig:framework}
    \vspace{-0.15in}
\end{figure*}
\section{MoDES}\label{sec:modes}
  
Based on the above analyses, we propose \textit{MoDES} (\textit{Multimodal Dynamic Expert Skipping}), an efficient training-free framework composed of two key components, as illustrated in Fig.~\ref{fig:framework}:  
(\textit{i}) A \textit{globally-modulated local gating (GMLG)} (Sec.~\ref{sec:global}) mechanism that integrates a global and layer-level calibration with local routing probabilities to compute refined importance scores for top-$k$ experts; and  
(\textit{ii}) a \textit{dual-modality thresholding (DMT)}
(Sec.~\ref{sec:modality}) method that determines modality-specific skipping boundaries based on these importance scores.  
An efficiency–effectiveness search strategy is further introduced to optimize the threshold configuration under a given computational budget.

\subsection{Globally-Modulated Local Gating}
\label{sec:global}
In light of \underline{\textit{Insight (i)}} in Sec.~\ref{sec:motivation_global}, we present a \textit{globally-modulated local gating (GMLG)} mechanism, which combines the global contributions of experts with local routing behaviors to estimate expert importance for given tokens. During inference, experts in $\mathcal{S}^{(l)}$ (Eq.~(\ref{eq:aggregation})) with importance scores lower than the thresholds (defined in Sec.~\ref{sec:modality}) will be skipped. Specifically, for $\texttt{Expert}^{(l)}_i$ ($i\in \mathcal{S}^{(l)}$) with an input token $\mathbf{x}^{(l)}$, the importance score is defined as:
\begin{equation}
     s^{(l)}_i = \alpha^{(l)} \cdot \pi^{(l)}_i,
    \label{eq:important}
\end{equation}
where $\pi^{(l)}_i$ is the local routing probability (Eq.~(\ref{eq:logits})) that $\texttt{Expert}_i^{(l)}$ will be activated for $\mathbf{x}^{(l)}$. The globally-modulated factor $\alpha^{(l)}$ reflects the impact of experts in the layer on the final prediction, which is obtained by offline calibration. This $s^{(l)}_i$ accounts for both global and local contributions, yielding an accurate importance estimation.

To obtain $\alpha^{(l)}$, we calculate the Kullback-Leibler (KL) divergence between the output distribution of the original model and that of a counterpart where experts in the $l$-th layer are skipped:
\begin{equation}
    \alpha^{(l)} = \frac{1}{N}\sum^N_{j=1}\mathcal{D}_\mathrm{KL}\left(\texttt{prob}_j \,||\, \texttt{prob}^{(l)}_j\right),
    \label{eq:alpha}
\end{equation} 
where $N$ is the size of data (\emph{i.e.}, $\mathcal{C}=\{c_1,\ldots, c_N\}$) used for this calibration. $\texttt{prob}_j$ and $\texttt{prob}^{(l)}_j$ are the output probabilities for the $j$-th example of $\mathcal{C}$ from the original and modified models, respectively. This process quantifies the sensitivity of the model’s output to the removal of experts in certain layers, and $\alpha^{(l)}$ serves as a global importance weight reflecting their relative contributions. With the pre-computed $\alpha^{(l)}$, the final importance score $s^{(l)}_i$ can be obtained without additional overhead during inference.

\subsection{Dual-Modality Thresholding}
\label{sec:modality}
Building on \underline{\textit{Insight (ii)}} in Sec.~\ref{sec:motivation_modality}, we introduce a \textit{dual-modality thresholding (DMT)} method to adaptively determine modality-specific \textit{expert skipping} thresholds for MLLMs. We define two thresholds: $\tau_\text{t}$ for text tokens and $\tau_\text{v}$ for visual tokens, which control the degree of \textit{expert skipping} for each modality. This design considers the distinct behavior of tokens from different modalities, thereby allowing a tailored and effective skipping strategy.

To be specific, based on the importance scores (Eq.~(\ref{eq:important})) for the $l$-th layer, experts that should be skipped for the given token $\mathbf{x}^{(l)}$ are:
\begin{equation}
    \{\texttt{Expert}_i^{(l)} \mid s^{(l)}_i < \tau_\text{t} \cdot \mathbb{I}_{\text{t}} + \tau_\text{v} \cdot \mathbb{I}_\text{v}\},
    \label{eq:threshold}
\end{equation}
where $\mathbb{I}_\text{t}$ and $\mathbb{I}_\text{v}$ are text and vision token indicator functions for $\mathbf{x}^{(l)}$, respectively.

To find the optimal $\tau_\text{t}$ and $\tau_\text{v}$ that balance computational efficiency with model performance, we propose a \textit{frontier search} algorithm that effectively and efficiently determines these thresholds under an efficiency constraint. We first formulate the problem in the following.

\begin{algorithm}
\caption{Frontier search for optimal thresholds.}
\label{alg:frontier_search}
\small
\texttt{func} \textsc{FrontierSearch}($\mathcal{B}, \rho$)
\begin{algorithmic}[1]
    \Require 
    \Statex \space $\mathcal{B}$ --- Candidate set of thresholds $\{\tau^{(1)}, \dots, \tau^{(D)}\}$
    \Statex \space $\rho$ --- Target skipping ratio

    \State $\texttt{frontier}\gets \emptyset$
    \State $p\gets D$ %
    \For{$q=1$ \textbf{to} $D$}
        \While{$p\ge 1$ \textbf{and} $g(\tau^{(q)},\tau^{(p)}) \ge \rho$}
            \State $p \gets p-1$
        \EndWhile
        \State $p_{(q)} \gets p+1$ %
        \If{$p_{(q)} \le D$}
            \State Compute and save $f(\tau^{(q)}, \tau^{(p_{(q)})})$
            \State $\texttt{frontier} \gets \texttt{frontier} \cup \{(q,p_{(q)})\}$
        \EndIf
    \EndFor
    \State $(q^*, p^*) \gets \arg\,\min_{(q, p_{(q)})\in \texttt{frontier}} f(\tau^{(q)}, \tau^{(p_{(q)})})$
    \State \textbf{return} $(\tau^{(q^*)},\tau^{(p^*)})$
\end{algorithmic}

\end{algorithm}
\noindent\textbf{Problem definition.} For an MoE MLLM, the goal is to find the thresholds $\tau_\text{t}$ and $\tau_\text{v}$ that minimize the difference between the outputs of the original model and the \textit{expert-skipping} one, while satisfying a pre-defined target skipping ratio $\rho \in (0, 1)$. Hence, the problem can be expressed as:
\begin{equation}
\min_{\tau_\text{t} \in \mathcal{B}, \tau_\text{v}\in \mathcal{B}} f(\tau_\text{t}, \tau_\text{v})\quad  \text{s.t.} \quad g(\tau_\text{t}, \tau_\text{v}) \geq \rho,
\label{eq:problem}
\end{equation}
where $\mathcal{B}=\{\tau^{(1)},\ldots, \tau^{(D)}\}$ is the search grid set with $D$ candidates that satisfies $\tau^{(1)}< \tau^{(2)} <\ldots < \tau^{(D)}$. $f(\tau_\text{t}, \tau_\text{v})$ is the average KL divergence between the output distributions of the original model and the modified version, where experts are skipped according to Eq.~(\ref{eq:threshold}). $g(\tau_\text{t}, \tau_\text{v})$ is the fraction of experts that are skipped for the modified model.

\noindent\textbf{Frontier search.} We start with a monotonicity assumption:
\begin{assumption}
\textit{Holding other variables fixed, $f$ is non-decreasing in its respective arguments: If $q_1 \le q_2$, then $f(\tau^{(q_1)}, \tau^{(p)}) \le f(\tau^{(q_2)}, \tau^{(p)})$; and if $p_1 \le p_2$, then $f(\tau^{(q)}, \tau^{(p_1)}) \le f(\tau^{(q)}, \tau^{(p_2)})$.}
\label{ass:f_ass}
\end{assumption}
Intuitively, higher thresholds will skip more experts and degrade accuracy; hence, the assumption is reasonable. Obviously, $g$ is also non-decreasing in its respective arguments without any assumption. Given these monotonicity properties, we can search for a $\texttt{frontier}$ set $\{(q, p_{(q)})\}$ with a time complexity of $\mathcal{O}(ND)$~\footnote{We compute $f$ and $g$ on data $\mathcal{C}$ (with $N$ samples), which is also used in Eq.~(\ref{eq:alpha}).} through Lines 1-12 in Alg.~\ref{alg:frontier_search}. Here, $p_{(q)}$ for a given $q$ is defined as:
\begin{equation}
   p_{(q)} = \min \left\{\, p \in \{1, \dots, D\} \mid g( \tau^{(p)}, \tau^{(q)}) \geq \rho \,\right\}.
\end{equation}
We provide detailed proofs for the correctness of the search algorithm and its time complexity in the Appendix. Finally, as demonstrated in Alg.~\ref{alg:frontier_search}, the optimal thresholds $(\tau^{(q^*)}, \tau^{(p^*)})$, which lie in \texttt{frontier} (proofs can also be found in the Appendix), are obtained through Lines~13–14. Since all values of $f(\tau^{(q)}, \tau^{(p_{(q)})})$ are already computed by Line~9, this step takes less than a second.

Overall, our \textit{frontier search} algorithm achieves a time complexity of $\mathcal{O}(ND)$. In comparison, a naive solution involves an exhaustive search of all $(\tau_\text{t}, \tau_\text{v})$ pairs in $\mathcal{B}\times \mathcal{B}$, leading to a time complexity of $\mathcal{O}(ND^2)$. In practice, our method cuts the search time by a remarkable $\sim$45$\times$ (as detailed in Sec.~\ref{sec:efficiency}).

\section{Experiments}
\label{sec:experiments}

\begin{table*}[!ht]\setlength{\tabcolsep}{4pt}
\vspace{-0.3in}
 \renewcommand{\arraystretch}{1.}
  \centering
  \caption{Performance comparisons for Kimi-VL-A3B-Instruct~\cite{team2025kimi} across various expert skipping ratios. We mark the target $\rho$ (Eq.~(\ref{eq:problem})) and the practical skipping ratio $x\%$ (\emph{i.e.}, ``\textit{Skip $x\%$ Experts}'') in the table. For each method, we compute the score proportion relative to the default setting (\emph{i.e.}, $k=6$) across benchmarks, and then compute the average value in the ``Avg. (\%)'' column. For the COCO dataset, we report the CIDEr~\cite{vedantam2015ciderconsensusbasedimagedescription} score here. The best and second-best results are highlighted in \textbf{bold} and \underline{underlined} formats, respectively.}
  \vspace{-0.1in}
  \resizebox{\linewidth}{!}{
  \begin{tabu}[t!]{l|ccccccccccccc|c}
\toprule
\multirow{2}{*}{\textbf{Method}} & \multicolumn{8}{c}{\textbf{Image Understanding}} & \multicolumn{5}{c}{\textbf{Video Understanding}} & \multirow{2}{*}{\makecell{\textbf{Avg.}\\ \textbf{(\%)}}}\\
\cmidrule(lr){2-9}\cmidrule(lr){10-14} 
& \rotatebox{0}{TextVQA} & \rotatebox{0}{ChartQA} & \rotatebox{0}{MMStar} & \rotatebox{0}{MMBench} & \rotatebox{0}{MMVet} & \rotatebox{0}{MME} & \rotatebox{0}{RealWorldQA} & \rotatebox{0}{COCO} & \rotatebox{0}{MVBench} & \rotatebox{0}{EgoSchema} & \rotatebox{0}{VMME} & \rotatebox{0}{LVB} & \rotatebox{0}{VMMMU} &\\
\midrule
$k=6$ (\textit{Default}) & 88.70 & 89.48 & 49.89 & 83.16 & 66.33 & 2207 & 65.36 & 86.70 & 61.80 & 78.18 & 66.59 & 63.13 & 49.33 & 100.00 \\
\midrule
\multicolumn{15}{c}{\cellcolor[gray]{0.92} \textit{Skip $50\%$ Experts} ($\rho=0.48$)} \\
$k=3$ & 85.41 & 86.20 & 51.21 & 80.67 & 57.71 & 2065 & 63.53 & \underline{87.56} & 60.42 & 75.71 & 64.30 & 60.14 & 44.22 & 95.93\\
NAEE~\cite{lu2024not} & 86.14& 85.74& 50.82& 80.58& 60.81& 2084& 64.55& 85.33& 60.02& 75.81& 65.16& 60.27& 45.08 & 96.44\\
MC-MoE~\cite{huang2024mixture} & 86.28& 87.94& \textbf{51.61}& \underline{81.32}& \underline{62.54}& 2138& 63.82& 86.24& 60.39& 76.57& \underline{66.24}& 60.62& 46.26 & 97.69\\
DiEP~\cite{bai2025diep} & \underline{87.43}& \underline{88.32}& \underline{51.48}& 80.26& 60.41& \underline{2159}& \underline{64.74}& 87.43& \underline{61.06}& \underline{77.32}& 65.96& \underline{61.04}& \underline{47.83} & \underline{98.17}\\
\rowcolor{mycolor!30} MoDES (\textit{Ours}) & \textbf{88.18} & \textbf{89.08} & 49.65 & \textbf{83.16} & \textbf{65.09} & \textbf{2203} & \textbf{65.62} & \textbf{88.23} & \textbf{61.95} & \textbf{78.41} & \textbf{67.19} & \textbf{62.83} & \textbf{49.00} & \textbf{99.91}\\
\midrule
\multicolumn{15}{c}{\cellcolor[gray]{0.92} \textit{Skip $67\%$ Experts} ($\rho=0.65$)} \\
$k=2$ & 83.49 & 85.12 & \textbf{52.10} & 78.87 & 53.49 & 2022 & 63.79 & \textbf{92.61} & 59.35 & 70.80 & 62.15 & 57.67 & 41.44 & 93.88\\
NAEE~\cite{lu2024not} & 82.84& 85.29& 50.74& 77.31& 56.67& 2083& \underline{64.54}& 82.09& 59.68& 72.29& 63.74& 58.36& 43.68 & 94.03 \\
MC-MoE~\cite{huang2024mixture} & \underline{85.07}& \underline{86.32}& \underline{51.13}& 77.65& \underline{58.42}& \underline{2104}& 63.61& 84.23& 59.86& \underline{74.36}& \underline{64.22}& \underline{59.73}& \underline{45.21} & \underline{95.45}\\
DiEP~\cite{bai2025diep} & 84.21& 85.56& 50.76& \underline{78.94}& 57.05& 2087& 64.02& \underline{87.54}& \underline{60.02}& 72.97& 61.07& 58.45& 44.93 & 94.81 \\
\rowcolor{mycolor!30} MoDES (\textit{Ours}) & \textbf{85.57} & \textbf{88.24} & 49.25 & \textbf{82.73} & \textbf{60.78} & \textbf{2204} & \textbf{64.58} & 85.37 & \textbf{61.65} & \textbf{77.98} & \textbf{66.52} & \textbf{62.90} & \textbf{48.78} & \textbf{98.46}\\
\midrule
\multicolumn{15}{c}{\cellcolor[gray]{0.92} \textit{Skip $83\%$ Experts} ($\rho=0.80$)} \\
$k=1$ & 77.17 & 76.68 & 42.65 & 54.55 & 22.98 & 1647 & 54.38 & 77.37 & 51.10 & 37.23 & 50.52 & 43.83 & 24.56 & 71.60\\
NAEE~\cite{lu2024not} & 75.73 & 78.41 & 41.48 & 69.14 & 43.41 & 1827 & 60.32 & 72.35 & 58.41 & 57.28 & 53.49 & 49.68 & 42.64 & 82.81\\
MC-MoE~\cite{huang2024mixture} & 79.41 & \underline{80.25} & \underline{43.57} & 73.42 & \underline{50.37} & 2063 & \underline{62.54} & \underline{80.42} & 54.87 & \underline{63.56} & \underline{59.87} & \underline{54.39} & \underline{44.02} & \underline{88.32}\\
DiEP~\cite{bai2025diep} & \underline{82.32} & 78.31 & 42.47 & \underline{76.28} & 47.45 & \underline{2071} & 61.34 & 77.91 & \underline{59.15} & 61.27 & 57.49 & 52.41 & 43.81 & 87.58\\
\rowcolor{mycolor!30} MoDES (\textit{Ours}) & \textbf{82.38} & \textbf{84.20} & \textbf{46.68} & \textbf{81.44} & \textbf{60.46} & \textbf{2162} & \textbf{64.84} & \textbf{81.33} & \textbf{61.30} & \textbf{76.98} & \textbf{65.48} & \textbf{62.60} & \textbf{47.11} & \textbf{96.25}\\

\bottomrule
\end{tabu}
}
    \label{tab:compare-baseline}
    \vspace{-0.2in}
\end{table*}
\begin{table}[!ht]\setlength{\tabcolsep}{5.5pt}
 \renewcommand{\arraystretch}{1.}
  \centering
  \caption{Performance of combination with quantization. MoDES employs the quantization strategy in MC-MoE~\cite{huang2024mixture}: weight-only mixed-precision quantization for MoE-based FFNs and 4-bit weight-only quantization for other layers.} 
  \vspace{-0.1in}
  \resizebox{\linewidth}{!}{
  \begin{tabu}[t!]{l|c|cccccc}
\toprule
\textbf{Method} & \textbf{\#Bit} & \rotatebox{0}{ChartQA} & \rotatebox{0}{MME} & MMBench  & \rotatebox{0}{LVB} & \rotatebox{0}{VMMMU} \\
\midrule
\multicolumn{7}{l}{\textit{Kimi-VL-A3B-Instruct~\cite{team2025kimi}}} \\
\midrule

$k=6$ (\textit{Default}) & 16 & 89.48 & 2207 & 83.16  & 63.13 & 49.33\\

\midrule
\multicolumn{7}{c}{\cellcolor[gray]{0.92} \textit{Skip $67\%$ Experts} ($\rho=0.65$)} \\

MC-MoE~\cite{huang2024mixture} & 2.5 & 78.47 & 2036 & 68.84 & 54.46 & 41.92\\
\rowcolor{mycolor!30} MoDES (\textit{Ours}) & 2.5 & \textbf{81.23} & \textbf{2137} & \textbf{76.48} & \textbf{58.10} & \textbf{43.67} \\
MC-MoE~\cite{huang2024mixture} & 1.5 & 69.46 & 1728 & 62.18 & 42.87 & 38.45\\
\rowcolor{mycolor!30} MoDES (\textit{Ours}) & 1.5 &  \textbf{72.28} & \textbf{1899} & \textbf{68.57} & \textbf{48.14} & \textbf{40.06}\\

\midrule
\multicolumn{7}{l}{\textit{Qwen3-VL-MoE-30B-A3B-Instruct~\cite{qwen3_vl_moe_doc}}}\\
\midrule
$k=8$ (\textit{Default}) & 16 & 85.08 & 2500 & 86.60  & 55.42 & 47.11 \\

\midrule
\multicolumn{7}{c}{\cellcolor[gray]{0.92} \textit{Skip $75\%$ Experts} ($\rho=0.73$)} \\

MC-MoE~\cite{huang2024mixture} & 2.5 & 76.36 & 2084 & 79.62 & 51.85 & 42.06\\
\rowcolor{mycolor!30} MoDES (\textit{Ours}) & 2.5 & \textbf{78.24} & \textbf{2281} & \textbf{81.34} & \textbf{53.63} & \textbf{46.28}\\
MC-MoE~\cite{huang2024mixture} & 1.5 & 70.42 & 1968 & 73.18 & 46.08 & 36.94\\
\rowcolor{mycolor!30} MoDES (\textit{Ours}) & 1.5 & \textbf{73.42} & \textbf{2113} & \textbf{75.54} & \textbf{47.32} & \textbf{42.01}\\

\bottomrule
\end{tabu}
}
    \label{tab:quant}
    \vspace{-0.2in}
\end{table}

\subsection{Setups}\label{sec:imple}

\noindent\textbf{Models and datasets.} We choose 3 series of MoE MLLMs to evaluate \textit{MoDES}: Kimi-VL~\cite{team2025kimi},
Qwen3-VL-MoE~\cite{qwen3_vl_moe_doc}, and InternVL-3.5~\cite{wang2025internvl3}. We use 8 zero-shot evaluation tasks for image understanding:
TextVQA$_{\text{val}}$~\cite{singh2019textvqa}, ChartQA~\cite{masry2022chartqa}, MMStar~\cite{chen2024rightwayevaluatinglarge}, MMBench$_{\text{dev, en}}$~\cite{liu2024mmbenchmultimodalmodelallaround},
MMVet~\cite{yu2023mmvet}, MME~\cite{fu2023mme}, RealWorldQA~\cite{realworldQA}, and COCO2017-Cap$_{\text{val}}$~\cite{lin2015microsoft} (COCO). For video understanding tasks, we adopt 5 benchmarks: MVBench~\cite{li2024mvbenchcomprehensivemultimodalvideo}, EgoSchema~\cite{mangalam2023egoschemadiagnosticbenchmarklongform}, VideoMME~\cite{fu2025video} (VMME), LongVideoBench$_{\text{val,v}}$~\cite{wu2024longvideobenchbenchmarklongcontextinterleaved} (LVB), and VideoMMMU~\cite{hu2025videommmuevaluatingknowledgeacquisition} (VMMMU). \texttt{lmms-eval}~\cite{zhang2024lmmsevalrealitycheckevaluation} is utilized to perform the above evaluation. For MMBench and MMVet, we use DeepSeek-V3.1~\cite{deepseekai2024deepseekv3technicalreport} to rate the generated texts.

\noindent\textbf{Baselines.} As there is no \textit{expert skipping} baselines for MLLMs and previous methods for LLMs only consider models with top-$2$ routing in practice, we re-implement and adjust them to top-$k$ ($k>2$) settings for MLLMs: For the $l$-th layer, NAEE~\cite{lu2024not} originally skips the top-$2$ expert if $\pi^{(l)}_{\text{top-}2}<\beta^{(l)}\cdot \pi^{(l)}_{\text{top-}1}$, where $\pi^{(l)}_{\text{top-}1}$ and $\pi^{(l)}_{\text{top-}2}$ denotes the top-$1$ and top-$2$ routing probabilities (Eq.~(\ref{eq:logits})). $\beta^{(l)}$ is a hyperparameter. Here, we adapt this strategy, referring to the Appendix of NAEE, to a more general top-$k$ scenario. Specifically,  top-$i$ to top-$k$ experts are skipped if $\sum_{u=i}^k\pi^{(l)}_{\text{top-}u}<\beta^{(l)}\cdot\sum^k_{v=1}\pi^{(l)}_{\text{top-}v}$. We also apply similar adjustments for MC-MoE~\cite{huang2024mixture} and DiEP~\cite{bai2025diep}, which build on top of NAEE. To be noted, without a specific claim, we adopt only the \textit{expert skipping} component of these works to enable a fair comparison. Moreover, we also compare our method with \textit{expert skipping} guided by directly reducing the value $k$ of top-$k$ routing.

\noindent\textbf{Implementation.} We employ 1024 samples randomly picked from the GQA~\cite{hudsom2019gqa} dataset to calibrate $\alpha^{(l)}$ (Eq.~(\ref{eq:alpha})) and search optimal $(\tau_\text{t}, \tau_\text{v})$ (Eq.~(\ref{eq:threshold})). The search space $\mathcal{B}$ is given by $D=100$ grid points sampled in $(0, 1)$. More implementation details can be found in the Appendix.

\subsection{Evaluation}\label{sec:evaluation}

\noindent\textbf{Comparison with baselines.} We benchmark MoDES against baselines on Kimi-VL-A3B-Instruct~\cite{team2025kimi}. As shown in Tab.~\ref{tab:compare-baseline}, prior methods, such as NAEE~\cite{lu2024not}, MC-MoE~\cite{huang2024mixture}, and DiEP~\cite{bai2025diep}, struggle to balance performance and efficiency, especially at high \textit{expert-skipping} ratios ($\geq$67\%). Specifically, these baselines incur an average accuracy drop of more than 11\% when skipping 83\% of experts during inference. We argue that these declines arise because they rely solely on \textit{intra-layer} routing logits (Eq.~(\ref{eq:logits})) to determine the skipping schedule and are originally designed for unimodal LLMs. By contrast, our method, which considers both the impact of \textit{expert skipping} on the final output and the modality gap in MLLMs (Sec.~\ref{sec:motivation_modality}), executes only 13\% of experts, while preserving 96.25\% of the full model’s average accuracy. Moreover, even at a lower skipping ratio of 50\%, our approach still surpasses DiEP and MC-MoE by 1.74\% and 2.22\%, respectively. These findings validate the superiority of our method across different skipping ratios compared with existing SOTA approaches. In addition, on some benchmarks (\emph{e.g.}, RealWorldQA~\cite{realworldQA} and VideoMME~\cite{fu2025video}), using MoDES to skip redundant experts not only prevents degradation but also improves accuracy, suggesting that certain experts are not merely redundant but may actively interfere with inference.

\begin{table*}[!ht]\setlength{\tabcolsep}{4pt}
\vspace{-0.3in}
 \renewcommand{\arraystretch}{1.}
  \centering
  \caption{Performance comparisons across different backbones. InternVL series employs Qwen3~\cite{yang2025qwen3} and GPT-OSS~\cite{openai2025gptoss120bgptoss20bmodel} as LLM backbones for 30B and 20B models, respectively. The number of experts for each layer of models from upper to lower is 128, 128, and 32.} 
  \vspace{-0.1in}
  \resizebox{\linewidth}{!}{
  \begin{tabu}[t!]{l|ccccccccccccc|c}
\toprule
\multirow{2}{*}{\textbf{Method}} & \multicolumn{8}{c}{\textbf{Image Understanding}} & \multicolumn{5}{c}{\textbf{Video Understanding}} & \multirow{2}{*}{\makecell{\textbf{Avg.}\\ \textbf{(\%)}}}\\
\cmidrule(lr){2-9}\cmidrule(lr){10-14} 
& \rotatebox{0}{TextVQA} & \rotatebox{0}{ChartQA} & \rotatebox{0}{MMStar} & \rotatebox{0}{MMBench} & \rotatebox{0}{MMVet} & \rotatebox{0}{MME} & \rotatebox{0}{RealWorldQA} & \rotatebox{0}{COCO} & \rotatebox{0}{MVBench} & \rotatebox{0}{EgoSchema} & \rotatebox{0}{VMME} & \rotatebox{0}{LVB} & \rotatebox{0}{VMMMU} &\\
\midrule

\multicolumn{15}{l}{\textit{Qwen3-VL-MoE-30B-A3B-Instruct~\cite{qwen3_vl_moe_doc}}} \\
\midrule
$k=8$ (\textit{Default}) & 83.41 & 85.08 & 59.67 & 86.60 & 69.68 & 2500 & 66.80 & 80.37 & 64.67 & 62.45 & 54.89 & 55.42 & 47.11 & 100.00 \\
\midrule
\multicolumn{15}{c}{\cellcolor[gray]{0.92} \textit{Skip $88\%$ Experts} ($\rho=0.85$)} \\
$k=1$ & 60.71 & 52.16 & 31.63 & 54.90 & 28.07 & 1590 & 52.42 & 45.64 & 41.51 & 32.52 & 39.78 & 42.41 & 12.51 & 60.11\\
NAEE~\cite{lu2024not} & 72.41& 65.83& 48.88& 73.62& 54.52& 1984& 58.62& 60.37& 50.24& 49.77& 44.48& 45.59& 35.57 & 80.60 \\
MC-MoE~\cite{huang2024mixture} & \underline{74.87} & \underline{71.43}& 50.74& \underline{75.42}& \underline{61.35}& \underline{2168}& 60.41& \underline{68.15}& 56.60& 51.84& \underline{52.51}& \underline{47.22}& \underline{37.41} & \underline{86.66} \\
DiEP~\cite{bai2025diep} & 73.46& 70.51& \underline{53.28}& 73.21& 58.64& 2074& \underline{63.41}& 62.89& \underline{57.21}& \underline{53.61}& 50.78& 46.13& 34.79 & 85.30 \\
\rowcolor{mycolor!30} MoDES (\textit{Ours}) & \textbf{80.97} & \textbf{78.84} & \textbf{58.18} & \textbf{85.57} & \textbf{67.75} & \textbf{2403} & \textbf{64.58} & \textbf{74.66} & \textbf{62.98}& \textbf{62.04} & \textbf{55.26} & \textbf{55.50} & \textbf{46.56} & \textbf{97.33}\\

\midrule
\multicolumn{15}{l}{\textit{InternVL-3.5-30B-A3B-HF~\cite{wang2025internvl3}}} \\
\midrule
$k=8$ (\textit{Default}) & 85.76 & 84.08 & 62.49 & 83.81 & 69.93 & 2312 & 64.77 & 69.30 & 68.92 & 60.49 & 58.07 & 57.64 & 45.11 & 100.00\\
\midrule
\multicolumn{15}{c}{\cellcolor[gray]{0.92} \textit{Skip $88\%$ Experts} ($\rho=0.85$)} \\
$k=1$ & 58.49 & 46.24 & 42.27 & 51.74 & 35.05 & 1683 & 51.44 & 26.01 & 31.99 & 34.47 & 35.26 & 37.40 & 24.27 & 59.63\\
NAEE~\cite{lu2024not} & 66.24& 68.32& 50.14& 64.37& 49.52& 1802& 55.23& 50.64& 54.78& 50.25& 48.69& 47.42& 37.27 & 78.88\\
MC-MoE~\cite{huang2024mixture} & \underline{70.41}& \underline{73.49}& 56.14& \underline{64.38} & \underline{72.41}& \underline{1972}& \underline{57.49}& \underline{60.12}& \underline{58.97}& \underline{52.31}& \underline{49.72}& \underline{48.31}& \underline{40.06} & \underline{86.20} \\
DiEP~\cite{bai2025diep} & 69.37& 71.84& \underline{57.21}& 63.19& 65.32& 1838& 56.38& 55.78& 56.26& 51.48& 48.94& 47.26& 38.18 & 83.26\\
\rowcolor{mycolor!30} MoDES (\textit{Ours}) & \textbf{80.58} & \textbf{82.00} & \textbf{61.20} & \textbf{81.67} & \textbf{67.80} & \textbf{2222} & \textbf{61.73} & \textbf{65.16} & \textbf{68.65} & \textbf{60.79} & \textbf{57.63} & \textbf{54.49} & \textbf{44.33} & \textbf{97.03}\\

\midrule
\multicolumn{15}{l}{\textit{InternVL-3.5-GPT-OSS-20B-A4B-Preview-HF~\cite{wang2025internvl3}}} \\
\midrule
$k=4$ (\textit{Default}) & 80.20 & 90.64 & 57.64 & 79.78 & 69.68 & 2270 & 61.63 & 70.61 & 67.65 & 58.79 & 53.93 & 54.65 & 43.79 & 100.00\\
\midrule
\multicolumn{15}{c}{\cellcolor[gray]{0.92} \textit{Skip $75\%$ Experts} ($\rho=0.73$)} \\
$k=1$ & 68.74 & 79.72 & 45.77 & 67.63 & 48.49 & 1833 & 53.20 & 60.70 & 56.95 & 49.40 & 44.04 & 44.28 & 41.66 & 77.58 \\
NAEE~\cite{lu2024not} & 73.89& 82.34& 44.89& 71.59& 54.97& 2017& \textbf{63.46}& 59.73& 51.25& 46.21& 47.83& 45.48& 42.08 & 86.79 \\
MC-MoE~\cite{huang2024mixture} & 76.49& 84.53& 46.25& 73.68& 56.83& \underline{2137}& \underline{61.07}& 60.42& \underline{60.06}& \underline{50.28}& 48.37& 46.68& \underline{42.89} & 89.91 \\
DiEP~\cite{bai2025diep} & \underline{77.31}& \underline{86.24}& \underline{48.18}& \underline{74.26}& \underline{58.07}& 2109& 60.25& \underline{62.08}& 54.18& 49.83& \underline{49.42}& \underline{47.91}& 42.31 & \underline{90.07}\\
\rowcolor{mycolor!30} MoDES (\textit{Ours}) & \textbf{77.93} & \textbf{89.60} & \textbf{56.48} & \textbf{78.14} & \textbf{66.33} & \textbf{2206} & 60.64 & \textbf{68.32} & \textbf{66.60} & \textbf{57.95} & \textbf{53.59} & \textbf{53.68} & \textbf{43.13} & \textbf{97.89}\\

\bottomrule
\end{tabu}
}
    \label{tab:compare-backbone}
    \vspace{-0.15in}
\end{table*}

\noindent\textbf{Combination with quantization.} We conduct experiments to demonstrate the high compatibility of our MoDES with model quantization. As shown in Tab.~\ref{tab:quant} (see the performance without quantization for \textit{expert skipping} in Tab.~\ref{tab:compare-baseline} and the Appendix), quantization causes a much smaller performance drop for MoDES than for MC\mbox{-}MoE. For instance, on Kimi-VL-A3B-Instruct with a $\sim$10.67$\times$ compression ratio (\emph{i.e.}, 1.5 bits), quantization reduces MoDES's performance by 17.30\%, compared with $>$20\% for MC-MoE. In addition, 2.5-bit quantization keeps MoDES more than 90\% of the original model performance. Remarkably, for Qwen3-VL-MoE-30B-A3B-Instruct, it retains 94.43\% performance, whereas 2.5-bit MC-MoE retains 89.58\%. In future work, we will explore combining MoDES with other orthogonal techniques, such as pruning and distillation, to further reduce the computational demands of MoE MLLMs.

\noindent\textbf{Comparison across backbones.} In Tab.~\ref{tab:compare-backbone}, we evaluate our method across multiple backbones. On the powerful Qwen3-VL-MoE-30B-A3B-Instruct model~\cite{qwen3_vl_moe_doc}, our approach retains 97.33\% of the original performance at an aggressive skipping ratio of 88\%. Moreover, across backbones, our method outperforms other skipping strategies by more than 5\% points in average accuracy. Taken together, these results highlight the effectiveness and universality of our technique in identifying redundant experts for tokens of different modalities and across different layers. In addition, we provide comparisons across different skipping ratios for these models in the Appendix, where our method consistently delivers higher accuracy at matched skipping ratios. We further exhibit some qualitative visual reasoning examples in the Appendix to comprehensively demonstrate the superiority of our method.

\subsection{Efficiency Discussion}\label{sec:efficiency}

\begin{figure}[!ht]
\vspace{-0.2in}
    \centering
    \includegraphics[width=0.95\linewidth]{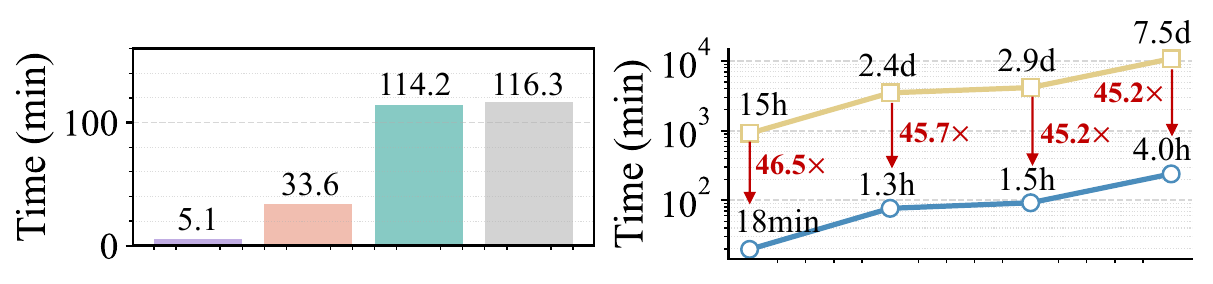}
    \vspace{-0.15in}
     \caption{(\textit{Left}) $\alpha^{(l)}$ calibration time. (\textit{Right}) Search time of \textit{frontier search} (\bl{blue}) \emph{vs.} \textit{naive search} (\myyellow{yellow}). The bars/markers from \textit{left} to \textit{right} are for Kimi-VL-A3B-Instruct~\cite{team2025kimi}, Qwen3-VL-MoE-30B-A3B-Instruct~\cite{qwen3_vl_moe_doc}, InternVL-3.5-30B-A3B-HF~\cite{wang2025internvl3}, and InternVL-3.5-GPT-OSS-20B-A4B-Preview-HF~\cite{wang2025internvl3}.}
    \label{fig:calib_search}
    \vspace{-0.1in}
\end{figure}

\noindent\textbf{Calibration and search efficiency.} As illustrated in Fig.~\ref{fig:calib_search}, we evaluate the calibration and search times of MoDES for MoE MLLMs with $\geq$20B parameters on $8\times$H200 GPUs. It is important to note that since InternVL-3.5-GPT-OSS-20B-A4B-Preview-HF~\cite{wang2025internvl3} in the \texttt{transformers}~\cite{wolf-etal-2020-transformers} library supports only naive attention computation, its time consumption is significantly higher compared to the same-sized Kimi-VL-A3B-Instruct, which uses \texttt{flash-attention2}~\cite{dao2023flashattention2fasterattentionbetter}. As observed from the other models, MoDES processes 20-30B MoE MLLMs (\emph{i.e.}, calibration + search) in 20 minutes to under 4 hours, demonstrating high efficiency. Furthermore, compared to \textit{naive search} with $\mathcal{O}(ND^2)$ complexity, our \textit{frontier search} with $\mathcal{O}(ND)$ significantly reduces the search time by $\sim$45$\times$. In terms of performance, we benchmarked \textit{naive search} for Kimi-VL-A3B-Instruct~\cite{team2025kimi} with an 83\% \textit{expert skipping} ratio and found nearly identical average performance with \textit{frontier search} (96.24\% \emph{vs.} 96.25\%). This result helps confirm the correctness of our Alg.~\ref{alg:frontier_search}.

\begin{figure}[!ht]
\vspace{-0.15in}
   \centering
    \setlength{\abovecaptionskip}{0.2cm}
   \begin{minipage}[b]{\linewidth}
       \begin{minipage}[b]{\linewidth}
            \centering
            \begin{subfigure}[tp!]{\textwidth}
            \centering
            \includegraphics[width=\linewidth]{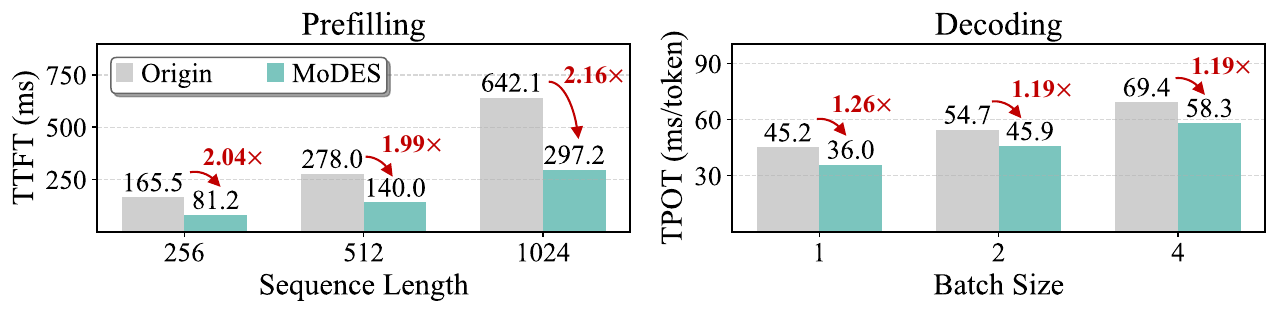}
            \end{subfigure}
       \end{minipage}
       \begin{minipage}[b]{\linewidth}
            \centering
            \begin{subfigure}[tp!]{\linewidth}
            \centering
            \includegraphics[width=\linewidth]{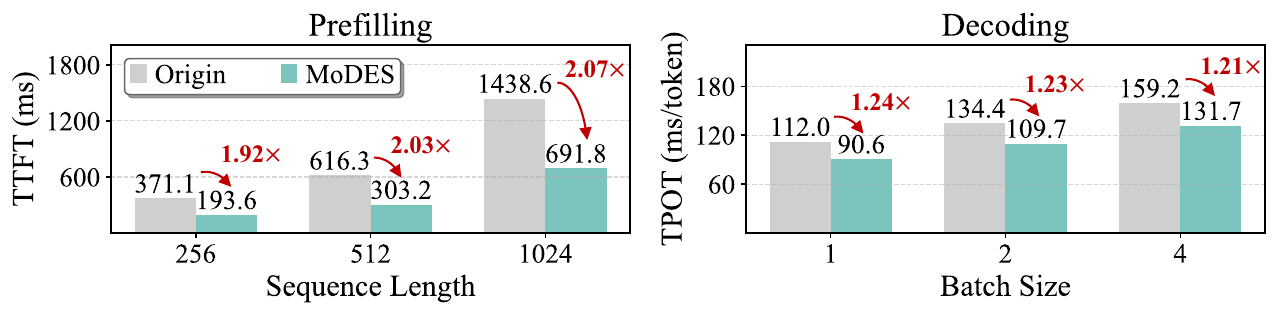}
            \end{subfigure}
       \end{minipage}
   \end{minipage}
   \vspace{-0.1in}
     \caption{Inference speed for (\textit{Upper}) Kimi-VL-A3B-Instruct~\cite{team2025kimi} and (\textit{Lower}) Qwen3-VL-MoE-30B-A3B-Instruct~\cite{qwen3_vl_moe_doc} on a single H200 GPU. The \textit{expert skipping} ratios for the former and the latter are 83\% and 88\%, respectively. The batch size for prefilling is 8, and the sequence length for decoding is 1024.}
    \label{fig:inference}
    \vspace{-0.15in}
\end{figure}

\noindent\textbf{Inference efficiency.} Next, we study the practical inference speedup. As shown in Fig.~\ref{fig:inference}, MoDES attains an $\sim$2$\times$ speedup in the prefill phase compared with the original model. In the decoding phase, it still delivers a $\sim$1.2$\times$ speedup. The smaller ratio during decoding likely arises because: (\textit{i}) MoDES primarily reduces computation in MoE layers, while decoding remains memory-bound; and (\textit{ii}) only text tokens are processed during decoding, which leads to lower \textit{expert skipping} ratios (Sec.~\ref{sec:vis}). In addition, baselines like DiEP~\cite{bai2025diep} use offline calibration to select hyperparameters, so their inference overhead is negligible. Under the same skipping ratios, their speedup ratios are similar to ours with $<$1\% difference. Despite this, our method outperforms them across benchmarks by a clear margin (Sec.~\ref{sec:evaluation}).

\subsection{Ablation Studies}
In this section, we employ Kimi-VL-A3B-Instruct~\cite{team2025kimi}, and the settings are the same as those in Sec.~\ref{sec:imple} without specific claims. More ablations can be found in the Appendix.

\noindent\textbf{Effect of each component.} We evaluate each component of MoDES and use a single threshold $\tau$ \emph{w.r.t.} $s^{(l)}_i=\pi_i^{(l)}$ (denoted as ``Thresholding'') with a grid search as our baseline. As shown in Tab.~\ref{tab:ablation}, GMLG, which incorporates both global and local contributions, significantly enhances both Thresholding and DMT. Moreover, by applying different thresholds for different modalities, DMT outperforms Thresholding by a large margin. These results underscore the importance of the two key insights discussed in Sec.~\ref{sec:motivation}, highlighting the substantial contributions of both GMLG and DMT. Remarkably, performance improvements derived from GMLG and DMT increase as the skipping ratio grows.

\begin{table}[!ht]\setlength{\tabcolsep}{5.5pt}
\vspace{-0.1in}
 \renewcommand{\arraystretch}{1.}
  \centering
  \caption{Ablation results for each component of MoDES. ``Thresholding'' means we employ a single threshold $\tau$ for both modalities and adopt a grid search for the optimal $\tau$. For Thresholding and DMT, we set $s^{(l)}_i=\pi_i^{(l)}$, instead of using Eq.~(\ref{eq:important}).} 
  \vspace{-0.1in}
  \resizebox{\linewidth}{!}{
  \begin{tabu}[t!]{l|cccccc}
\toprule
\textbf{Method} & \rotatebox{0}{ChartQA} & \rotatebox{0}{MME} & MMBench  & \rotatebox{0}{LVB} & \rotatebox{0}{VMMMU} \\
\midrule

$k=6$ (\textit{Default}) &  89.48 & 2207 & 83.16  & 63.13 & 49.33\\

\midrule

\multicolumn{7}{c}{\cellcolor[gray]{0.92} \textit{Skip $67\%$ Experts} ($\rho=0.65$)} \\
\midrule
Thresholding & 85.48 & 2030 & 77.67 & 57.97 & 45.56\\ 
Thresholding w/ GMLG & \underline{87.64} & \underline{2172} & 79.46 & 60.24 & 46.48\\
DMT & 87.47 & 2158 & \underline{81.07} & \underline{61.26} & \underline{46.88}\\
\rowcolor{mycolor!30}DMT w/ GMLG (\textit{Ours}) & \textbf{88.24} & \textbf{2204} & \textbf{82.73} & \textbf{62.90} & \textbf{48.78} \\

\midrule
\multicolumn{7}{c}{\cellcolor[gray]{0.92} \textit{Skip $83\%$ Experts} ($\rho=0.80$)} \\
\midrule
Thresholding & 76.74 & 1956 & 65.48 & 54.67 & 40.33\\ 
Thresholding w/ GMLG & 79.28 & \underline{2107} & 75.19 & 60.02 & 43.87\\
DMT & \underline{82.94} & 2081 & \underline{79.42} & \underline{61.16} & \underline{45.08} \\
\rowcolor{mycolor!30}DMT w/ GMLG (\textit{Ours}) & \textbf{84.20} & \textbf{2162} & \textbf{81.44} & \textbf{62.60} & \textbf{47.11} \\
\bottomrule
\end{tabu}
}
    \label{tab:ablation}
    \vspace{-0.15in}
\end{table}

\begin{figure}[!ht]
\vspace{-0.15in}
\centering
\begin{minipage}{0.6\linewidth}
    \centering
    \captionof{table}{Ablation results of using 3 different datasets for both calibration and frontier search (C\&S).}
    \vspace{-0.1in}
    \resizebox{\linewidth}{!}{
    \begin{tabu}[t!]{l|ccc}
\toprule
\textbf{C\&S} & \cellcolor{mycolor!30}\rotatebox{0}{GQA} & \rotatebox{0}{COCO} & \rotatebox{0}{VMMMU} \\
\midrule
\multicolumn{4}{c}{\cellcolor[gray]{0.92} \textit{Skip $83\%$ Experts} ($\rho=0.80$)} \\
\midrule
GQA & \cellcolor{mycolor!30}\textbf{62.68} & \underline{62.65} & 62.63 \\
COCO & \cellcolor{mycolor!30}\underline{81.33} & \textbf{81.72} & 80.72\\
VMMMU & \cellcolor{mycolor!30}\underline{47.11} & \textbf{47.67} & \textbf{47.67}\\
ChartQA & \cellcolor{mycolor!30}\underline{84.20} & \textbf{86.56} & 83.46 \\
MMBench & \cellcolor{mycolor!30}\underline{81.44} & 79.38 & \textbf{81.87}\\
MME & \cellcolor{mycolor!30}\underline{2162} & \textbf{2138} & 2136 \\
LVB & \cellcolor{mycolor!30}\underline{62.60} & 62.30 & \textbf{62.75}\\

\bottomrule
\end{tabu}
}
    \label{tab:data}
\end{minipage}%
\hfill
\begin{minipage}{0.38\linewidth}
    \centering
    \includegraphics[width=\linewidth]{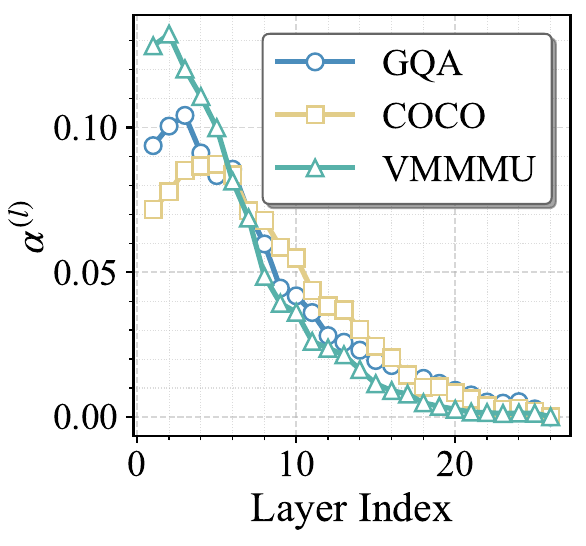} %
    \vspace{-0.3in}
    \caption{Visualization results of global contributions $\alpha^{(l)}$ (Eq.~(\ref{eq:alpha})) across layers and various datasets.}
    \label{fig:data}
\end{minipage}
\vspace{-0.15in}
\end{figure}

\noindent\textbf{Choice of data.} We also investigate the effect of different datasets with randomly sampled 1024 examples on MoDES. In Fig.~\ref{fig:data}, the trends of $\alpha^{(l)}$ across datasets are similar, with shallow layers having larger values than deep layers. This aligns with our insight in Sec.~\ref{sec:motivation_global}, where experts in shallow layers contribute more to the final outputs. Additionally, the performance is also consistent across datasets, as shown in Tab.~\ref{tab:data}. These results indicate that MoDES is robust and not sensitive to the choice of dataset.

\subsection{Visualization Analysis}\label{sec:vis}

\begin{figure}[!ht]
\vspace{-0.1in}
   \centering
    \setlength{\abovecaptionskip}{0.2cm}
   \begin{minipage}[b]{\linewidth}
       \begin{minipage}[b]{0.498\linewidth}
            \centering
            \begin{subfigure}[tp!]{\textwidth}
            \centering
            \includegraphics[width=\linewidth]{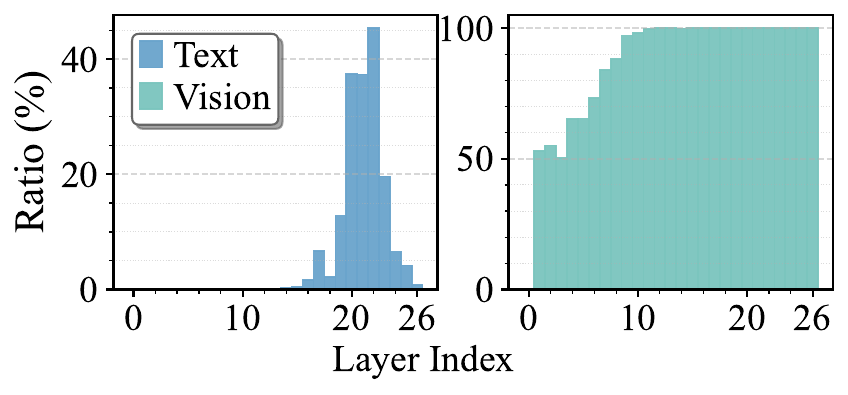}
            \end{subfigure}
       \end{minipage}\hfill
       \begin{minipage}[b]{0.498\linewidth}
            \centering
            \begin{subfigure}[tp!]{\linewidth}
            \centering
            \includegraphics[width=\linewidth]{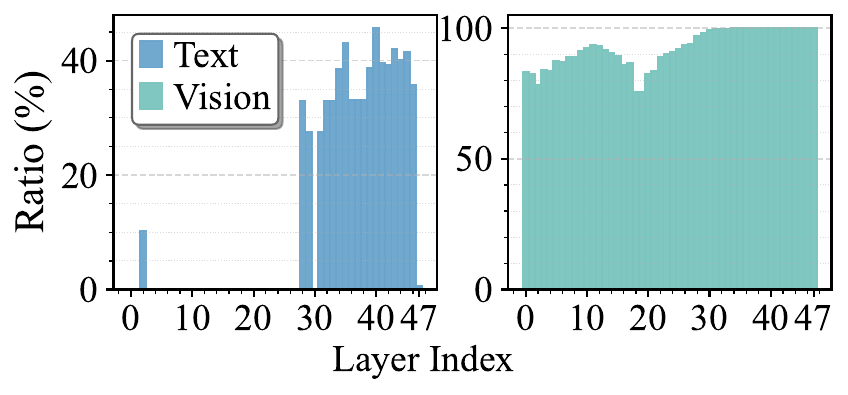}
            \end{subfigure}
       \end{minipage}
   \end{minipage}
   \vspace{-0.1in}
     \caption{Visualization of \textit{expert skipping} ratios (\%) across modalities and layers on 13 benchmarks (Sec.~\ref{sec:imple}). The \textit{left} subfigure is for Kimi-VL-A3B-Instruct~\cite{team2025kimi} and the \textit{right} subfigure is for Qwen3-VL-MoE-30B-A3B-Instruct~\cite{qwen3_vl_moe_doc}. The \textit{overall} skipping ratios for the former and the latter are 83\% and 88\%, respectively.}
    \label{fig:token}
    \vspace{-0.12in}
\end{figure}
In this section, we visualize the \textit{expert skipping} ratios of MoDES across modalities and layers to interpret the effectiveness of our approach. As shown in Fig.~\ref{fig:token}, our method skips substantially more experts in deeper layers than in shallower layers, which is consistent with the key insight discussed in Sec.~\ref{sec:motivation_global}. In addition, it skips far more experts for vision tokens than for text tokens, indicating greater redundancy among experts for vision tokens. We corroborate this observation with experiments in the Appendix. These results suggest that a uniform, modality-agnostic skipping schedule is inappropriate. This finding also reinforces the second insight in Sec.~\ref{sec:motivation_modality} and helps explain how our method preserves the model’s strong performance.

\section{Conclusions}
\label{sec:conclusion}
In this work, we proposed \textit{MoDES}, a novel framework for \textit{expert skipping} in MoE multimodal large language models (MLLMs). First, we identified two key insights: The imbalance of expert contributions across layers and the distinct behaviors between modalities in FFNs. Based on these findings, we introduced a \textit{globally-modulated local gating (GMLG)} mechanism and a \textit{dual-modality thresholding (DMT)} method, which allow the model to adaptively skip experts based on layer-specific importance and modality-specific characteristics. Additionally, we developed an efficient \textit{frontier search} algorithm, which greatly improves search efficiency for threshold optimization. Extensive experiments on large-scale multimodal benchmarks demonstrate that MoDES provides significant computational savings without sacrificing performance.

\section*{Acknowledgement}
This work was supported by the National Natural Science Foundation of China (Nos. 62476018), and the Postdoctoral Fellowship Program of CPSF (No. BX20250487). This work was also supported by the Hong Kong Research Grants Council under the Areas of Excellence scheme grant AoE/E-601/22-R and NSFC/RGC Collaborative Research Scheme grant CRS\_HKUST603/22.

{
    \small
    \bibliographystyle{ieeenat_fullname}
    \bibliography{main}
}

\newpage
\appendix
\begin{center}{\bf \Large Appendix}\end{center}\vspace{-2mm}
\renewcommand{\thetable}{\Roman{table}}
\renewcommand{\thefigure}{\Roman{figure}}
\renewcommand{\theequation}{\Roman{equation}}
\setcounter{table}{0}
\setcounter{figure}{0}
\setcounter{equation}{0}

\Crefname{appendix}{Appendix}{Appendixes}

This document supplements the main paper as follows:
\begin{itemize}[leftmargin=*]
    \item Sec.~\ref{sec:proof} provides detailed proofs for the proposed \textit{frontier search};
    \item Sec.~\ref{sec:more_steups} details additional experimental setups;
    \item Sec.~\ref{sec:more_compare} provides additional comparisons with baselines across different \textit{expert skipping} ratios and MLLMs;
    \item Sec.~\ref{sec:reasoning_vis} presents visual question answering examples across methods;
    \item Sec.~\ref{sec:ablation_n} reports ablations on the number of grid points in \textit{frontier search};
     \item Sec.~\ref{sec:ablation_d} shows ablations on the number of samples used for calibration and search;
    \item Sec.~\ref{sec:redundancy} analyzes expert redundancy \emph{w.r.t.} tokens across modalities.
\end{itemize}

\begin{table*}[!ht]\setlength{\tabcolsep}{4pt}
 \renewcommand{\arraystretch}{1}
  \centering
  \caption{Performance comparisons for Qwen3-VL-MoE-30B-A3B-Instruct~\cite{qwen3_vl_moe_doc} across various expert skipping ratios.}
  \vspace{-0.1in}
  \resizebox{\linewidth}{!}{
  \begin{tabu}[t!]{l|ccccccccccccc|c}
\toprule
\multirow{2}{*}{\textbf{Method}} & \multicolumn{8}{c}{\textbf{Image Understanding}} & \multicolumn{5}{c}{\textbf{Video Understanding}} & \multirow{2}{*}{\makecell{\textbf{Avg.}\\ \textbf{(\%)}}}\\
\cmidrule(lr){2-9}\cmidrule(lr){10-14} 
& \rotatebox{0}{TextVQA} & \rotatebox{0}{ChartQA} & \rotatebox{0}{MMStar} & \rotatebox{0}{MMBench} & \rotatebox{0}{MMVet} & \rotatebox{0}{MME} & \rotatebox{0}{RealWorldQA} & \rotatebox{0}{COCO} & \rotatebox{0}{MVBench} & \rotatebox{0}{EgoSchema} & \rotatebox{0}{VMME} & \rotatebox{0}{LVB} & \rotatebox{0}{VMMMU} &\\
\midrule
$k=8$ (\textit{Default}) & 83.41 & 85.08 & 59.67 & 86.60 & 69.68 & 2500 & 66.80 & 80.37 & 64.67 & 62.45 & 54.89 & 55.42 & 47.11 & 100.00 \\
\midrule

\multicolumn{15}{c}{\cellcolor[gray]{0.92} \textit{Skip $63\%$ Experts} ($\rho=0.60$)} \\
$k=3$ & 80.81 & 78.12 & 66.74 & 83.33 & 68.39 & 2326 & 45.88 & 71.70 & 62.02 & 57.96 & 53.48 & 54.60 & \textbf{50.44} & 95.20 \\
NAEE~\cite{lu2024not} & 81.20 & 79.41 & 55.39 & 84.18 & 68.61 & 2348 & 59.67 & \textbf{78.09} & 61.31 & 58.32 & 51.08 & \underline{55.12} & 48.32 & 95.61\\
MC-MoE~\cite{huang2024mixture} & \textbf{82.51} & 79.37 & 56.48 & \underline{86.12} & \underline{69.37} & \underline{2438} & \underline{62.01} & \underline{76.82} & 62.61 & 58.73 & \underline{54.22} & 54.13 & 48.54 & \underline{97.09} \\
DiEP~\cite{bai2025diep} & \underline{82.04} & \underline{80.23} & \underline{57.26} & 85.07 & 68.42 & 2405 & 60.31 & 75.41 & \underline{63.15} & \underline{59.46} & 53.41 & 55.08 & 48.76 & 96.80 \\
\rowcolor{mycolor!30} MoDES (\textit{Ours}) & 81.82 & \textbf{82.48} & \textbf{58.61} & \textbf{86.17} & \textbf{69.95} & \textbf{2493} & \textbf{63.92} & 76.55 & \textbf{64.42} & \textbf{62.39} & \textbf{55.15} & \textbf{55.50} & \underline{49.89} & \textbf{99.22} \\
\midrule

\multicolumn{15}{c}{\cellcolor[gray]{0.92} \textit{Skip $75\%$ Experts} ($\rho=0.73$)} \\
$k=2$ & 77.54 & 69.60 & 62.38 & 80.50 & 61.33 & 2060 & 55.56 & \textbf{82.77} & 60.70 & 53.79 & 50.67 & 54.08 & 46.00 & 92.03\\
NAEE~\cite{lu2024not} & 78.42 & 77.28 & 54.64 & 81.34 & 65.58 & 2208 & 61.75 & \underline{77.31} & 60.98 & 55.24 & 48.87 & \underline{54.87} & 47.12 & 93.25\\
MC-MoE~\cite{huang2024mixture} & \underline{80.13} & 78.41 & \underline{57.02} & \underline{85.32} & \underline{67.22} & \underline{2286} & \underline{61.83} & 74.49 & 61.65 & 57.13 & \underline{52.64} & 54.03 & 47.49 & 95.11 \\
DiEP~\cite{bai2025diep} & 79.64 & \underline{78.52} & 56.48 & 84.91 & 67.13 & 2243 & 60.94 & 75.53 & \underline{62.78} & \underline{57.86} & 52.38 & 54.62 & \underline{48.16} & \underline{95.21} \\
\rowcolor{mycolor!30} MoDES (\textit{Ours}) & \textbf{81.65} & \textbf{82.44} & \textbf{58.78} & \textbf{86.25} & \textbf{67.61} & \textbf{2469} & \textbf{64.71} & 75.73 & \textbf{64.45} & \textbf{62.53} & \textbf{54.81} & \textbf{55.57} & \textbf{51.22} & \textbf{99.11} \\
\midrule

\multicolumn{15}{c}{\cellcolor[gray]{0.92} \textit{Skip $88\%$ Experts} ($\rho=0.85$)} \\
$k=1$ & 60.71 & 52.16 & 31.63 & 54.90 & 28.07 & 1590 & 52.42 & 45.64 & 41.51 & 32.52 & 39.78 & 42.41 & 12.51 & 60.11\\
NAEE~\cite{lu2024not} & 72.41& 65.83& 48.88& 73.62& 54.52& 1984& 58.62& 60.37& 50.24& 49.77& 44.48& 45.59& 35.57 & 80.60 \\
MC-MoE~\cite{huang2024mixture} & \underline{74.87} & \underline{71.43}& 50.74& \underline{75.42}& \underline{61.35}& \underline{2168}& 60.41& \underline{68.15}& 56.60& 51.84& \underline{52.51}& \underline{47.22}& \underline{37.41} & \underline{86.66} \\
DiEP~\cite{bai2025diep} & 73.46& 70.51& \underline{53.28}& 73.21& 58.64& 2074& \underline{63.41}& 62.89& \underline{57.21}& \underline{53.61}& 50.78& 46.13& 34.79 & 85.30 \\
\rowcolor{mycolor!30} MoDES (\textit{Ours}) & \textbf{80.97} & \textbf{78.84} & \textbf{58.18} & \textbf{85.57} & \textbf{67.75} & \textbf{2403} & \textbf{64.58} & \textbf{74.66} & \textbf{62.98}& \textbf{62.04} & \textbf{55.26} & \textbf{55.50} & \textbf{46.56} & \textbf{97.33}\\

\bottomrule
\end{tabu}
}
    \label{tab:compare-qwen}
    \vspace{-0.1in}
\end{table*}
\begin{table*}[!ht]\setlength{\tabcolsep}{4pt}
 \renewcommand{\arraystretch}{1.}
  \centering
  \caption{Performance comparisons for InternVL-3.5-30B-A3B-HF~\cite{wang2025internvl3} across various expert skipping ratios.}
  \vspace{-0.1in}
  \resizebox{\linewidth}{!}{
  \begin{tabu}[t!]{l|ccccccccccccc|c}
\toprule
\multirow{2}{*}{\textbf{Method}} & \multicolumn{8}{c}{\textbf{Image Understanding}} & \multicolumn{5}{c}{\textbf{Video Understanding}} & \multirow{2}{*}{\makecell{\textbf{Avg.}\\ \textbf{(\%)}}}\\
\cmidrule(lr){2-9}\cmidrule(lr){10-14} 
& \rotatebox{0}{TextVQA} & \rotatebox{0}{ChartQA} & \rotatebox{0}{MMStar} & \rotatebox{0}{MMBench} & \rotatebox{0}{MMVet} & \rotatebox{0}{MME} & \rotatebox{0}{RealWorldQA} & \rotatebox{0}{COCO} & \rotatebox{0}{MVBench} & \rotatebox{0}{EgoSchema} & \rotatebox{0}{VMME} & \rotatebox{0}{LVB} & \rotatebox{0}{VMMMU} &\\
\midrule

$k=8$ (\textit{Default}) & 85.76 & 84.08 & 62.49 & 83.81 & 69.93 & 2312 & 64.77 & 69.30 & 68.92 & 60.49 & 58.07 & 57.64 & 45.11 & 100.00\\
\midrule

\multicolumn{15}{c}{\cellcolor[gray]{0.92} \textit{Skip $63\%$ Experts} ($\rho=0.60$)} \\
$k=3$ & 82.16 & 81.38 & 60.30 & 77.94 & \textbf{68.67} & 1964 & 61.34 & 65.47 & 65.34 & 58.83 & 55.62 & 55.81 & 42.07 & 94.79\\
NAEE~\cite{lu2024not} & 82.98& 83.02& 61.18& 79.65& 67.57& 2054& 61.47& 66.05& 66.73& 58.46& 56.34& 55.74& 42.81 & 95.86 \\
MC-MoE~\cite{huang2024mixture} & \textbf{84.36}& \textbf{83.22}& 61.45& \underline{80.89}& \textbf{68.67}& \underline{2192}& 62.13& 66.87& \underline{67.38}& \underline{59.03}& \underline{56.79}& 56.02& \underline{43.45} & \underline{97.25}\\
DiEP~\cite{bai2025diep} & 83.68& 82.79& \underline{61.82}& 80.22& 68.13& 2084& \underline{62.56}& \underline{67.17}& 66.82& 58.74& 56.25& \textbf{57.84}& 43.16 & 96.82\\
\rowcolor{mycolor!30} MoDES (\textit{Ours}) & \underline{84.27} & \underline{83.15} & \textbf{62.06} & \textbf{81.46} & \underline{68.41} & \textbf{2289} & \textbf{63.10} & \textbf{68.22} & \textbf{68.64} & \textbf{60.15} & \textbf{57.76} & \underline{56.12} & \textbf{43.84} & \textbf{98.42}\\

\midrule
\multicolumn{15}{c}{\cellcolor[gray]{0.92} \textit{Skip $75\%$ Experts} ($\rho=0.73$)} \\
$k=2$ & 64.51 & 64.25 & 46.69 & 71.56 & 56.42 & 1821 & 57.29 & 58.28 & 61.42 & 53.25 & 51.06 & 48.87 & 38.63 & 83.02\\
NAEE~\cite{lu2024not} & 75.37& 76.18& \underline{58.82}& 74.53& 61.38& 1968& 59.47& 63.31& 64.46& 54.83& \underline{55.45}& 52.79& 41.08 & 90.76\\
MC-MoE~\cite{huang2024mixture} & \underline{77.41}& 78.24& 57.65& 75.58& \underline{66.41}& \underline{2037}& \underline{60.28}& \underline{64.24}& \underline{65.18}& \underline{56.14}& 53.65& 53.08& \underline{41.74} & \underline{92.30} \\
DiEP~\cite{bai2025diep} & 76.84& \underline{79.12}& 58.42& \underline{76.14}& 65.27& 2021& 58.74& 63.10& 64.89& 55.83& 54.12& \underline{54.22}& 40.23& 91.80\\
\rowcolor{mycolor!30} MoDES (\textit{Ours}) & \textbf{82.13} & \textbf{82.54} & \textbf{61.46} & \textbf{81.88} & \textbf{67.92} & \textbf{2258 }& \textbf{62.48} & \textbf{67.89} & \textbf{68.83} & \textbf{60.32} & \textbf{57.54} & \textbf{55.85} & \textbf{44.16} & \textbf{97.90} \\

\midrule
\multicolumn{15}{c}{\cellcolor[gray]{0.92} \textit{Skip $88\%$ Experts} ($\rho=0.85$)} \\
$k=1$ & 58.49 & 46.24 & 42.27 & 51.74 & 35.05 & 1683 & 51.44 & 26.01 & 31.99 & 34.47 & 35.26 & 37.40 & 24.27 & 59.63\\
NAEE~\cite{lu2024not} & 66.24& 68.32& 50.14& 64.37& 49.52& 1802& 55.23& 50.64& 54.78& 50.25& 48.69& 47.42& 37.27 & 78.88\\
MC-MoE~\cite{huang2024mixture} & \underline{70.41}& \underline{73.49}& 56.14& \underline{64.38} & \underline{65.41}& \underline{1972}& \underline{57.49}& \underline{60.12}& \underline{58.97}& \underline{52.31}& \underline{49.72}& \underline{48.31}& \underline{40.06} & \underline{86.20} \\
DiEP~\cite{bai2025diep} & 69.37& 71.84& \underline{57.21}& 63.19& 65.32& 1838& 56.38& 55.78& 56.26& 51.48& 48.94& 47.26& 38.18 & 83.26\\
\rowcolor{mycolor!30} MoDES (\textit{Ours}) & \textbf{80.58} & \textbf{82.00} & \textbf{61.20} & \textbf{81.67} & \textbf{67.80} & \textbf{2222} & \textbf{61.73} & \textbf{65.16} & \textbf{68.65} & \textbf{60.79} & \textbf{57.63} & \textbf{54.49} & \textbf{44.33} & \textbf{97.03}\\

\bottomrule
\end{tabu}
}
    \label{tab:compare-intern}
    \vspace{-0.1in}
\end{table*}

\section{Proofs}\label{sec:proof}
In this section, we first provide complete proofs of the correctness and time complexity for our frontier search (Prop.~\ref {prop:correct-time}). We then prove that the optimal thresholds lie on the frontier (Prop.~\ref{prop:opt-frontier}).

\begin{lemma}[Monotone feasibility in $p$]\label{lem:mono-p}
\textit{For fixed $q$, define
\begin{equation}
\Phi_q(p) := \bigl[g(\tau^{(q)},\tau^{(p)}) \ge \rho\bigr].
\end{equation}
If $g$ is non-decreasing in its second argument, then $\Phi_q(p)$ is monotone in $p$.
Hence, if a feasible $p$ exists, the smallest feasible index
\begin{equation}
p_{(q)} := \min\{\,p:\Phi_q(p)\,\}
\end{equation}
is well-defined.}
\end{lemma}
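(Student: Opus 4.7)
The plan is to dispatch both assertions by direct appeal to the monotonicity hypothesis on $g$ and elementary properties of finite totally ordered sets. First I would fix $q$ and take any pair of indices $p_1, p_2 \in \{1,\ldots,D\}$ with $p_1 \le p_2$. Since the grid is strictly ordered as $\tau^{(1)} < \cdots < \tau^{(D)}$, this yields $\tau^{(p_1)} \le \tau^{(p_2)}$, and the assumption that $g$ is non-decreasing in its second argument gives
\begin{equation}
g(\tau^{(q)}, \tau^{(p_1)}) \;\le\; g(\tau^{(q)}, \tau^{(p_2)}).
\end{equation}
Hence, if $\Phi_q(p_1) = 1$, i.e., $g(\tau^{(q)},\tau^{(p_1)}) \ge \rho$, transitivity of $\ge$ forces $g(\tau^{(q)}, \tau^{(p_2)}) \ge \rho$, so $\Phi_q(p_2) = 1$. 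This is exactly the monotonicity conclusion, stated as upward-closure of the feasible set $F_q := \{\, p \in \{1,\ldots,D\} : \Phi_q(p) = 1 \,\}$.

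Next, for well-definedness of $p_{(q)}$, I would argue as follows. By the standing hypothesis at least one feasible $p$ exists, so $F_q \neq \emptyset$. Since $F_q \subseteq \{1,\ldots,D\}$ is a non-empty finite subset of $\mathbb{N}$, the well-ordering principle provides a unique minimum, which is precisely $p_{(q)}$. Combined with upward-closure from the first step, this pins down the structural fact $F_q = \{p_{(q)}, p_{(q)}+1, \ldots, D\}$, which is the property exploited in the inner \textbf{while} loop of Alg.~\ref{alg:frontier_search} to advance $p$ without backtracking.

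The main obstacle is not technical but expositional: one must be careful to interpret \emph{monotone in $p$} as upward-closure of the indicator $\Phi_q$ (a $0{\to}1$ step pattern in $p$) rather than literal monotonicity of a $\{0,1\}$-valued function, since the Iverson-bracket formulation could be misread either way. I would therefore phrase the conclusion in terms of $F_q$ and derive the existence of $p_{(q)}$ as an immediate corollary, rather than proving an order-preservation statement about $\Phi_q$ directly. With that clarification in place, both halves of the lemma reduce to a single invocation of the monotonicity hypothesis plus the well-ordering of $\{1,\ldots,D\}$, and no further ingredients are required.
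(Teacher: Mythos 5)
Your proof is correct and follows essentially the same route as the paper's: one invocation of the monotonicity of $g$ in its second argument shows the feasible set is a suffix of $\{1,\dots,D\}$, and well-ordering gives the minimum. The added remark about reading ``monotone'' as upward-closure of the indicator is a fair clarification but does not change the argument.
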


\begin{proof}
If $p_1 \le p_2$ and $\Phi_q(p_1)$ holds, then by monotonicity of $g$ in its second argument,
\begin{equation}
g(\tau^{(q)},\tau^{(p_2)}) \ge g(\tau^{(q)},\tau^{(p_1)}) \ge \rho,
\end{equation}
so $\Phi_q(p_2)$ holds. Therefore, the feasible set is a suffix in $p$, and the minimum exists when the set is non-empty.
\end{proof}

\begin{lemma}[Monotone shift in $q$]\label{lem:mono-q}
\textit{Assume $g$ is non-decreasing in its first argument. If $q' \le q$ and both $p_{(q')}$ and $p_{(q)}$ exist, then
\begin{equation}
p_{(q)} \le p_{(q')}.
\end{equation}}
\end{lemma}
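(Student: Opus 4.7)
The plan is to prove Lemma~\ref{lem:mono-q} by a direct transfer-of-feasibility argument. The key idea is that $p_{(q)}$ is defined as the minimum over a threshold set determined by $g$, and monotonicity of $g$ in its first argument says that increasing $q$ only enlarges this feasible set, so its minimum cannot grow.

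First, I would instantiate the definition of $p_{(q')}$ from Lemma~\ref{lem:mono-p}: since $p_{(q')}$ is the smallest feasible index at $q'$, in particular it is feasible, i.e., $g(\tau^{(q')}, \tau^{(p_{(q')})}) \ge \rho$. Next, I would apply monotonicity of $g$ in its first argument: because $q' \le q$ implies $\tau^{(q')} \le \tau^{(q)}$ (the candidate grid is ordered), we get
\begin{equation}
g(\tau^{(q)}, \tau^{(p_{(q')})}) \;\ge\; g(\tau^{(q')}, \tau^{(p_{(q')})}) \;\ge\; \rho,
\end{equation}
so the index $p_{(q')}$ is itself feasible at $q$. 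Finally, since $p_{(q)}$ is the minimum feasible index at $q$, minimality yields $p_{(q)} \le p_{(q')}$.

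There is essentially no technical obstacle here: the monotonicity hypothesis on $g$ does all the work, and the ordering of the grid $\tau^{(1)} < \tau^{(2)} < \ldots < \tau^{(D)}$ is already built into the setup. The only subtlety to flag is that we must already know feasibility exists at both $q'$ and $q$ — which is given in the hypothesis — so that $p_{(q')}$ and $p_{(q)}$ are well-defined via Lemma~\ref{lem:mono-p} and the argument does not vacuously collapse.

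The main payoff is algorithmic rather than mathematical: Lemma~\ref{lem:mono-q} is precisely what licenses the two-pointer sweep in Alg.~\ref{alg:frontier_search}, where the pointer $p$ moves only leftward as $q$ increases from $1$ to $D$. I would therefore conclude the proof by noting that this monotone shift, combined with Lemma~\ref{lem:mono-p}, implies the total work of the inner \textbf{while}-loop across all outer iterations is $\mathcal{O}(D)$ in pointer updates, which underpins the $\mathcal{O}(ND)$ complexity claim used later.
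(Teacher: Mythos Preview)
Your proof is correct and is essentially the same as the paper's: both use monotonicity of $g$ in its first argument to show that any $p$ feasible at $q'$ is also feasible at $q$, then conclude by minimality of $p_{(q)}$. The paper phrases this as a set containment $\{p:\Phi_q(p)\}\supseteq\{p:\Phi_{q'}(p)\}$ followed by taking minima, while you work with the single witness $p_{(q')}$, but the content is identical.
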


\begin{proof}
For any fixed $p$ and $q' \le q$,
\begin{equation}
g(\tau^{(q)},\tau^{(p)}) \ge g(\tau^{(q')},\tau^{(p)}).
\end{equation}
Hence
\begin{equation}
\{\,p:\Phi_q(p)\,\} \supseteq \{\,p:\Phi_{q'}(p)\,\}.
\end{equation}
Taking minima over these sets gives $p_{(q)} \le p_{(q')}$.
\end{proof}

\begin{lemma}[Loop invariant]\label{lem:invariant}
\textit{Let $p$ be the pointer value at the start of the $q$-th outer iteration in Alg.~\ref{alg:frontier_search}.
If $p_{(q)}$ exists, then
\begin{equation}
p \ge p_{(q)} - 1 .
\end{equation}
Moreover, after the inner loop for this $q$, the algorithm stops at $p = p_{(q)} - 1$ and records $p_{(q)} = p+1$.}
\end{lemma}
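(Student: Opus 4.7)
The plan is to prove both parts jointly by induction on $q$, using Lemmas~\ref{lem:mono-p} and~\ref{lem:mono-q} as black boxes. For the base case $q = 1$, I would simply note that the pointer starts at $p = D$ and $p_{(1)} \in \{1,\dots,D\}$, so $p \ge p_{(1)} - 1$ holds trivially. For the inductive step in the clean situation where $p_{(q-1)}$ also exists, the inductive hypothesis says that the inner loop at iteration $q - 1$ terminated at $p = p_{(q-1)} - 1$, which then serves as the entry pointer for iteration $q$. Lemma~\ref{lem:mono-q} yields $p_{(q)} \le p_{(q-1)}$, so the entry pointer satisfies $p = p_{(q-1)} - 1 \ge p_{(q)} - 1$, as required.

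For the second part, I would analyse the while loop directly. Let $p_0 \ge p_{(q)} - 1$ denote the entry value guaranteed by the first part. By Lemma~\ref{lem:mono-p}, the feasibility set $\{p : g(\tau^{(q)},\tau^{(p)}) \ge \rho\}$ is exactly the suffix $\{p_{(q)}, p_{(q)} + 1, \dots, D\}$. Hence whenever $p \ge p_{(q)}$ both guards hold and the loop decrements, and the loop exits precisely when $p$ reaches $p_{(q)} - 1$: either because the feasibility test fails (when $p_{(q)} \ge 2$) or because the explicit $p \ge 1$ check blocks a step from $0$ to $-1$ (when $p_{(q)} = 1$, so $p_{(q)} - 1 = 0$). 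In both cases $p + 1 = p_{(q)}$ on exit, so the assignment $p_{(q)} \gets p + 1$ faithfully records the minimum feasible index as defined.

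The main obstacle I foresee is the subtler case of the induction where $p_{(q-1)}$ fails to exist while $p_{(q)}$ does, since then the inductive step above cannot be invoked as stated. The remedy is to strengthen the induction so that, whenever $p_{(q)}$ exists, the pointer entering iteration $q$ equals $p_{(q^{*})} - 1$ for the largest $q^{*} < q$ at which $p_{(q^{*})}$ exists, or equals $D$ if no such $q^{*}$ exists. This holds because at any intermediate iteration $q' \in (q^{*},q)$ the feasibility set is empty, so the while loop never executes its body and the pointer carries forward unchanged. Applying Lemma~\ref{lem:mono-q} to the pair $(q^{*}, q)$, or using $p \le D$ directly in the boundary situation, then recovers $p \ge p_{(q)} - 1$ in this case as well. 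Beyond this piece of bookkeeping, the argument is routine, and the same pointer-monotonicity observation gives the $\mathcal{O}(D)$ total inner-loop work across the $D$ outer iterations, which combines with the per-query cost on the calibration set of size $N$ to yield the $\mathcal{O}(ND)$ bound cited in the main text.
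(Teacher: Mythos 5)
Your proof is correct and follows essentially the same route as the paper's: induction on $q$ using Lemmas~\ref{lem:mono-p} and~\ref{lem:mono-q}, with the suffix structure of the feasible set forcing the inner loop to halt exactly at $p_{(q)}-1$ (the $p\ge 1$ guard covering the $p_{(q)}=1$ corner). You are in fact slightly more careful than the paper, whose inductive step tacitly assumes $p_{(q)}$ exists when carrying the pointer to iteration $q+1$; your observation that the infeasible indices form a prefix (by monotonicity of $g$ in $q$) and that the pointer passes through them unchanged at $D$ cleanly closes that small gap.
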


\begin{proof}
Base case ($q=1$): The algorithm sets $p \leftarrow D$, and $D \ge p_{(1)}$, so the claim holds.

Inductive step: Assume that the claim holds for $q$.
By Lem.~\ref{lem:mono-p}, $\Phi_q$ is monotone in $p$.
The inner loop decreases $p$ until $\neg \Phi_q(p)$ holds for the first time.
Thus, it stops at $p = p_{(q)} - 1$, and the code sets $p_{(q)} \leftarrow p+1$. For the next iteration, the carried pointer is $p \leftarrow p_{(q)} - 1$. By Lem.~\ref{lem:mono-q}, $p_{(q+1)} \le p_{(q)}$, hence
\begin{equation}
p = p_{(q)} - 1 \ge p_{(q+1)} - 1.
\end{equation}
Thus, the invariant holds for $q+1$.
\end{proof}

\begin{proposition}[Correctness and time]\label{prop:correct-time}
\textit{Assume $g$ is non-decreasing in each argument.
Then Lines 1-12 of Alg.~\ref{alg:frontier_search} compute the frontier $\{(q,p_{(q)})\}$.
If each evaluation of $(f,g)$ on $\mathcal{C}$ costs $\mathcal{O}(N)$ time, the total time is $\mathcal{O}(ND)$.}
\end{proposition}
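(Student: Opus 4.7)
The plan is to reduce the proposition to an almost immediate consequence of the three preceding lemmas, with only a small amortization argument left to do. For correctness, I would invoke Lemma~3 (Loop invariant): at the conclusion of the $q$-th outer iteration, whenever $p_{(q)}$ exists the algorithm has located it exactly and assigned $p_{(q)} \leftarrow p + 1$ in Line~7. The guard in Line~8 then filters out those $q$ for which no feasible $p$ exists; indeed, by monotonicity in the first argument, infeasibility at $q$ entails infeasibility at every $q' \le q$, so the inner loop has never triggered and the pointer $p$ still sits at its initial value $D$, producing $p_{(q)}^{\text{algo}} = D + 1 > D$ which is correctly rejected. Iterating $q = 1,\ldots,D$ therefore enumerates exactly the set $\{(q, p_{(q)}) : p_{(q)}\ \text{exists}\}$, which by definition is the frontier.

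For the time bound, I would run an amortized argument on the pointer $p$. The outer loop contributes $D$ iterations. The inner while loop strictly decreases $p$ on each pass through its body, and crucially the pointer is never re-initialized across outer iterations: by Lemma~2, the carried value $p = p_{(q)} - 1$ already satisfies $p \ge p_{(q+1)} - 1$, so iteration $q+1$ correctly resumes from this position. Since $p$ starts at $D$ and remains non-negative throughout, the total number of inner-loop decrements summed across all $q$ is at most $D$. Each such decrement, together with the one failing guard test that exits the inner loop per outer iteration, costs a single $\mathcal{O}(N)$ evaluation of $g$ on $\mathcal{C}$; this yields $\mathcal{O}(D)$ evaluations of $g$ at $\mathcal{O}(N)$ each. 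Line~9 performs at most $D$ evaluations of $f$, each also $\mathcal{O}(N)$. Summing gives $\mathcal{O}(ND)$ overall, and the remaining bookkeeping (set insertions, pointer updates) is dominated.

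The main subtlety, and the only non-trivial point in the argument, is the amortization: one must argue that the pointer never needs to be rewound when moving from $q$ to $q+1$, so that the total inner-loop work telescopes to $\mathcal{O}(D)$ rather than the naive $\mathcal{O}(D^2)$. This is exactly the content of Lemma~2, which guarantees the monotone shift $p_{(q+1)} \le p_{(q)}$ and therefore certifies that the sought feasibility boundary only moves leftward as $q$ increases. With this observation in hand, both the correctness and the time-complexity halves of the proposition follow directly from Lemmas~1--3 with no additional machinery.
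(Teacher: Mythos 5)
Your argument is correct and follows essentially the same route as the paper: correctness via the loop invariant (Lemma~\ref{lem:invariant}) together with the prefix structure of infeasible $q$'s, and the $\mathcal{O}(ND)$ bound via the amortized, never-rewound pointer guaranteed by Lemma~\ref{lem:mono-q}, with at most $D$ decrements plus one failing guard test per outer iteration and at most $D$ evaluations of $f$. Your treatment of the infeasible-$q$ case is in fact slightly more explicit than the paper's (which leaves implicit that the pointer is still at $D$ when the guard rejects), but this is an elaboration rather than a different proof.
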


\begin{proof}
By Lem.~\ref{lem:mono-p}, each feasible $p_{(q)}$ is well-defined.
By Lem.~\ref{lem:invariant}, at the $q$-th iteration the inner loop stops at $p=p_{(q)}-1$ and records $p_{(q)}=p+1$, which is the smallest feasible index.
If no feasible $p$ exists for some $q$, then $\Phi_q(D)$ is false and the guard $p_{(q)} \le D$ excludes this $q$, as desired.
Therefore, Lines 1-12 are correct.

For the time bound, by Lem.~\ref{lem:mono-q}, $p_{(q)}$ is non-increasing in $q$. Hence, across all outer iterations, the while-guard inspects $g$ at most $D$ times when $p$ is decremented and at most $D$ additional times when the guard fails immediately at the start of an iteration, so the total number of guard evaluations of $g$ is at most $2D$ (\emph{i.e.}, $\mathcal{O}(D)$). Moreover, for each recorded frontier element $(q,p_{(q)})$ (at most $D$ in total), we use a single forward pass that computes $f(\tau^{(q)},\tau^{(p_{(q)})})$. Each evaluation costs $\mathcal{O}(N)$. Therefore, the total time is $\mathcal{O}(ND)$.
\end{proof}

\noindent\textbf{Implementation note.} In practice, we compute $f$ and $g$ simultaneously and can record their values. This merges their costs and reduces constant factors, while the asymptotic bound remains $\mathcal{O}(ND)$.

\begin{lemma}[Frontier suffices]\label{lem:frontier-suffices}
\textit{Assume $f$ is non-decreasing in each argument and $\mathcal{F}=\{(q,p): g(\tau^{(q)},\tau^{(p)})\ge \rho\}\neq\emptyset$.
For any fixed feasible $q$, the pair $(q,p_{(q)})$ satisfies
\begin{equation}
f\bigl(\tau^{(q)},\tau^{(p_{(q)})}\bigr) \le f\bigl(\tau^{(q)},\tau^{(p)}\bigr)
\quad \text{for all } (q,p)\in\mathcal{F}.
\end{equation}}
\end{lemma}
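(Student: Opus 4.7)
The plan is a direct reduction of the claim to the definition of $p_{(q)}$ combined with the monotonicity of $f$ given by Assumption 1. First, I would fix a feasible index $q$ and unpack what membership in $\mathcal{F}$ means for this slice: any $p$ with $(q,p)\in\mathcal{F}$ belongs to the per-$q$ feasible set $P_q := \{p'\in\{1,\dots,D\} : g(\tau^{(q)},\tau^{(p')})\ge \rho\}$. The hypothesis that $q$ is feasible guarantees $P_q\neq\emptyset$, so by Lemma~\ref{lem:mono-p} the quantity $p_{(q)} = \min P_q$ is well-defined.

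Second, I would exploit the defining minimality of $p_{(q)}$: for every $p\in P_q$ we automatically have $p_{(q)} \le p$, and since the grid is ordered as $\tau^{(1)} < \tau^{(2)} < \cdots < \tau^{(D)}$, this yields $\tau^{(p_{(q)})} \le \tau^{(p)}$. Now I would invoke the second-argument monotonicity of $f$ stated in Assumption~\ref{ass:f_ass}, which gives
\begin{equation}
f\bigl(\tau^{(q)},\tau^{(p_{(q)})}\bigr) \le f\bigl(\tau^{(q)},\tau^{(p)}\bigr),
\end{equation}
and this is exactly the asserted inequality. Ranging over all $(q,p)\in\mathcal{F}$ with the fixed $q$ completes the argument.

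I do not foresee a genuine obstacle here: the statement is essentially a one-line consequence of two already established facts, namely Lemma~\ref{lem:mono-p} (existence and minimality of $p_{(q)}$ from the monotonicity of $g$) and Assumption~\ref{ass:f_ass} (monotonicity of $f$). The only subtlety worth flagging is the implicit well-definedness condition: one must use feasibility of $q$ (equivalently, the existence of at least one $p$ making $(q,p)\in\mathcal{F}$) to ensure that $p_{(q)}$ exists before comparing grid indices. Once this is noted, the proof is purely definitional.
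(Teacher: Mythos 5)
Your proposal is correct and follows essentially the same argument as the paper: the minimality of $p_{(q)}$ gives $p_{(q)} \le p$ for every feasible $p$, and the second-argument monotonicity of $f$ (Assumption~\ref{ass:f_ass}) immediately yields the inequality. Your extra remarks on the well-definedness of $p_{(q)}$ via Lemma~\ref{lem:mono-p} are a fine (if slightly more explicit) packaging of what the paper leaves implicit.
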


\begin{proof}
By definition, $p \ge p_{(q)}$ for all feasible $(q,p)$.
Since $f$ is non-decreasing in its second argument,
\begin{equation}
f\bigl(\tau^{(q)},\tau^{(p_{(q)})}\bigr) \le f\bigl(\tau^{(q)},\tau^{(p)}\bigr).
\end{equation}
\end{proof}

\begin{proposition}[Optimality on the frontier]\label{prop:opt-frontier}
\textit{Under the assumptions of Lem.~\ref{lem:frontier-suffices}, any optimal solution of
\begin{equation}
\min_{(q,p)\in \{1,\dots,D\}^2} f(\tau^{(q)},\tau^{(p)})
\quad \text{s.t.}\quad g(\tau^{(q)},\tau^{(p)}) \ge \rho
\end{equation}
lies on the frontier $\{(q,p_{(q)})\}$.}
\end{proposition}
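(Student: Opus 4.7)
The plan is to attack Prop.~\ref{prop:opt-frontier} by contradiction. I would fix an arbitrary optimizer $(q^*, p^*)$ of the constrained problem and suppose, for contradiction, that it does not lie on the frontier, i.e., $p^* \neq p_{(q^*)}$. Optimality forces feasibility, so $(q^*, p^*) \in \mathcal{F}$; hence by Lem.~\ref{lem:mono-p}, $p_{(q^*)}$ is well-defined, and the minimality of $p_{(q^*)}$ within the feasible suffix $\{p : \Phi_{q^*}(p)\}$ forces $p^* > p_{(q^*)}$.

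Next I would invoke Lem.~\ref{lem:frontier-suffices} with $q$ fixed at $q^*$: since $(q^*, p^*) \in \mathcal{F}$, this gives $f(\tau^{(q^*)}, \tau^{(p_{(q^*)})}) \le f(\tau^{(q^*)}, \tau^{(p^*)})$. Combining this with the optimality of $(q^*, p^*)$ and the feasibility of the frontier pair $(q^*, p_{(q^*)})$ (which gives the reverse inequality) pins the two objective values to be equal. Therefore $(q^*, p_{(q^*)})$ is itself optimal, and since it lies on the frontier by definition, the minimum of $f$ over $\mathcal{F}$ is attained on $\{(q, p_{(q)})\}$; this justifies restricting the search in Lines~13--14 of Alg.~\ref{alg:frontier_search} to $\texttt{frontier}$ without loss.

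The main obstacle is the gap between this equality-of-values conclusion and the \emph{universal} wording ``any optimal solution lies on the frontier.'' Under only the weak monotonicity of $f$ granted by Assumption~\ref{ass:f_ass}, the two inequalities in the previous step collapse to an equality of $f$-values but do not force $p^* = p_{(q^*)}$: along $q=q^*$, distinct $p$ may tie in objective. To close the gap I would either (i) strengthen monotonicity to strict on feasible intervals at $q^*$, so that $p^* > p_{(q^*)}$ gives $f(\tau^{(q^*)}, \tau^{(p_{(q^*)})}) < f(\tau^{(q^*)}, \tau^{(p^*)})$, directly contradicting optimality; or (ii) adopt the natural lexicographic tiebreak favoring the smallest $p$ among equi-optimal pairs, which by construction selects $p_{(q^*)}$ and places every retained optimum on the frontier. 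Either route upgrades the existence result above into the verbatim universal statement of the proposition, with the tiebreak interpretation being the one implicit in the algorithm, which never visits non-frontier $(q, p)$ with $p > p_{(q)}$.
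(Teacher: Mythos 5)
Your core argument is the same as the paper's: both rest on Lem.~\ref{lem:frontier-suffices} to conclude that, for each feasible $q$, replacing $p$ by $p_{(q)}$ cannot increase $f$, so the constrained minimum is attained on $\{(q,p_{(q)})\}$ and Lines~13--14 of Alg.~\ref{alg:frontier_search} lose nothing by restricting to \texttt{frontier}. Where you go beyond the paper is in flagging the mismatch between this conclusion and the proposition's universal wording, and you are right to do so: under only the non-strict monotonicity of Assumption~\ref{ass:f_ass}, a feasible pair $(q^*,p^*)$ with $p^*>p_{(q^*)}$ can tie $(q^*,p_{(q^*)})$ in objective value and hence be optimal without lying on the frontier. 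Notably, the paper's own proof only ever establishes the existential version (``an optimal pair \emph{can be chosen} from the frontier''), which is exactly what your argument yields and is all that is needed for the algorithm's correctness; the proposition's statement is simply stronger than what either proof delivers. Your two proposed repairs (strict monotonicity, or restating the claim as ``the minimum is attained on the frontier'' / adopting a smallest-$p$ tiebreak) are both valid, and the second matches the intent of the algorithm. In short: your proof is correct, follows the paper's route, and your caveat identifies a real (if harmless) overstatement in the proposition as written.
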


\begin{proof}
By Lem.~\ref{lem:frontier-suffices}, for each feasible $q$, the best feasible choice in $p$ is $p_{(q)}$.
Therefore, an optimal pair can be chosen from
\begin{equation}
\bigl\{\, (q,p_{(q)}):\ p_{(q)}\ \text{exists} \,\bigr\},
\end{equation}
which is exactly the frontier.
This is what Lines~13–14 minimize over, using the $f$-values already stored when each $(q,p_{(q)})$ was inserted into the frontier.
\end{proof}

\begin{figure*}[!ht]
    \vspace{-0.1in}
    \centering
    \includegraphics[width=\linewidth]{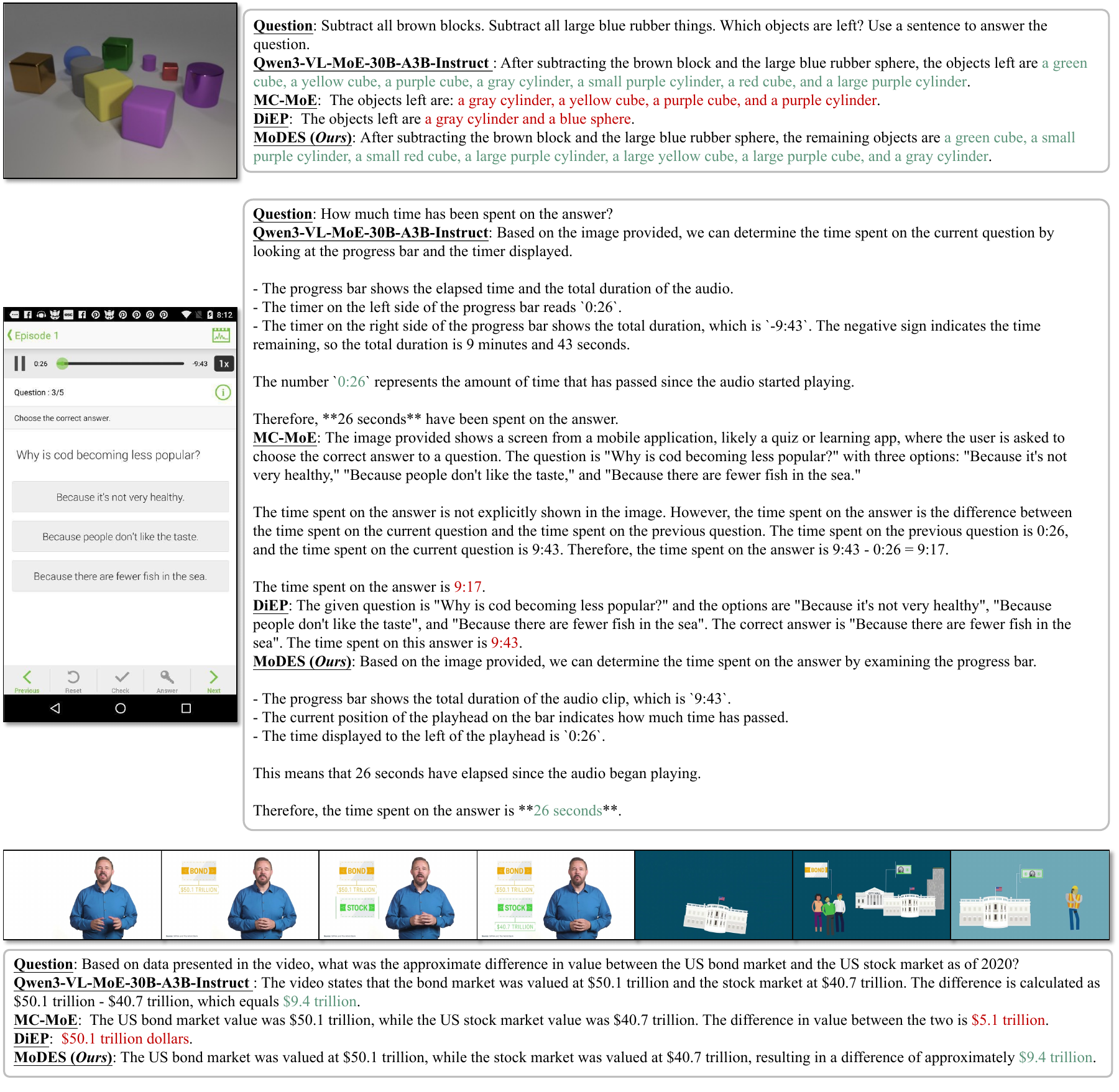}
    \vspace{-0.25in}
     \caption{Visual understanding examples from Qwen3-VL-MoE-A3B-Instruct~\cite{qwen3_vl_moe_doc}. We employ an $88\%$ \textit{expert skipping ratio} for \textit{all} methods, and color the text to show the \mygreen{correct} or the \myred{wrong} responses.}
    \label{fig:vis_exp_2}
    \vspace{-0.15in}
\end{figure*}
\begin{figure*}[!ht]
    \vspace{-0.1in}
    \centering
    \includegraphics[width=\linewidth]{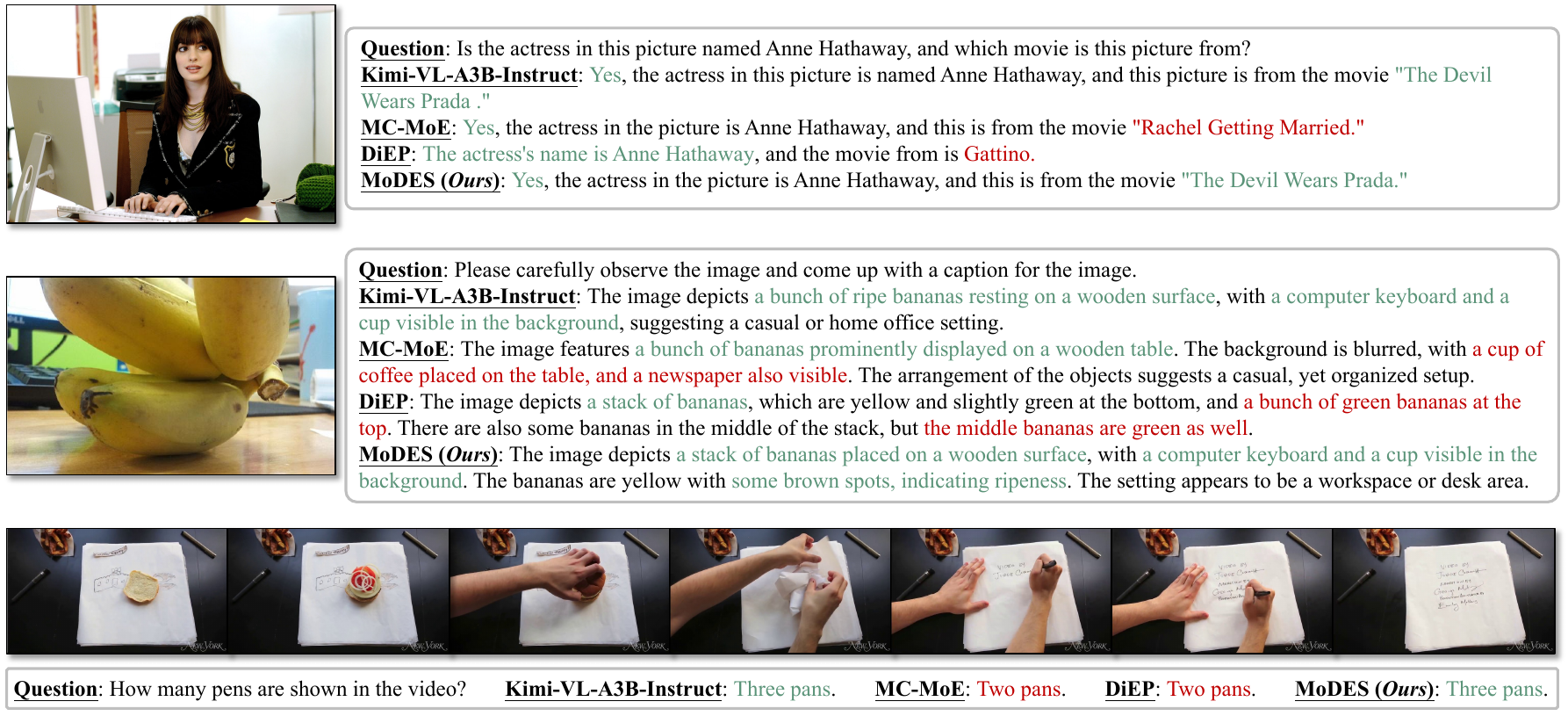}
    \vspace{-0.25in}
     \caption{Visual understanding examples from Kimi-VL-A3B-Instruct~\cite{team2025kimi}. We employ an $83\%$ \textit{expert skipping ratio} for \textit{all} methods.}
    \label{fig:vis_exp}
    \vspace{-0.15in}
\end{figure*}

\section{More Setups}\label{sec:more_steups}
\noindent\textbf{Baselines.} As noted in Sec.~\ref{sec:imple}, baselines such as NAEE~\cite{lu2024not}, MC-MoE~\cite{huang2024mixture}, and DiEP~\cite{bai2025diep} are not directly compatible with MoE MLLMs when ($k>2$). We therefore describe more about our adaptations here. For the hyperparameter $\beta^{(l)}$, we perform a genetic search under a given skipping ratio on the same dataset as our method. All remaining settings follow the original papers.

\noindent\textbf{Implementation.} In practice, we normalize $\alpha^{(l)}$ across layers as $\widetilde{\alpha^{(l)}}=\frac{\alpha^{(l)}}{\sum^L_{l'=1}\alpha^{(l')}}$. During inference, we compute $s^{(l)}_i=\widetilde{\alpha^{(l)}}\cdot \pi^{(l)}_i$ for a given token $\mathbf{x}^{(l)}$. Since $0<\pi^{(l)}_i<1$ ($i\in\mathcal{S}^{(l)}$), $s^{(l)}_i\in(0, 1)$. Thus, we choose $D=100$ grids in $(0, 1)$ as $\mathcal{B}$ to search for optimal thresholds. In detail, we apply a rectified \texttt{sigmoid} function to $100$ grids falling into $[0, 1]$ with equal intervals. 

For inference speed measurement, we write efficient CUDA kernels for MoE layers. 
First, we implement our dual-modality thresholding method inside the router kernel, so it introduces no extra kernel launches or separate decision pass. After computing router logits and top-$k$, we apply a branch-free masked comparison with the modality-specific threshold and directly edit the top-$k$ outputs: skipped routes are assigned an invalid sentinel expert id (\emph{e.g.}, $M{+}1$). During MoE dispatch/gather, sentinel entries are filtered out and thus never scheduled/executed, reducing both expert compute and expert loading. The added cost is only a few element-wise operations on the small top-$k$ list, so warp divergence/overhead is minimal and does not negate the observed wall-clock speedups. Moreover, to efficiently execute the computations for the activated experts, we employ a Grouped General Matrix Multiplication (Group GEMM) approach. Group GEMM enables the concurrent execution of all required matrix multiplications within a single, unified kernel launch. Each expert's computation is treated as an independent sub-task within the group. The performance of this kernel is highly dependent on the workload distribution. Therefore, to achieve maximum efficiency, we perform an offline profiling step where we conduct a grid search to identify the optimal kernel tile sizes for various representative activation patterns. This ensures high computational throughput across the diverse and dynamic workloads characteristic of MoDES computation.

All performance experiments are conducted on $8\times$H200 GPUs, and efficiency experiments are performed on a single H200 GPU.

\section{More Comparison with Baselines}\label{sec:more_compare}
We provide additional results for the Qwen3-VL-MoE-30B-A3B-Instruct~\cite{qwen3_vl_moe_doc} and  InternVL-3.5-30B-A3B-HF~\cite{wang2025internvl3} in Tabs.~\ref{tab:compare-qwen} and \ref{tab:compare-intern}, respectively. The observations from these
results align with the phenomena identified in Kimi-VL-A3B-Instruct~\cite{team2025kimi}. Across different \textit{expert skipping} ratios, our method consistently outperforms the baselines, with especially large gains at high skipping levels ($\geq$75\%).

\section{Visual Understanding Visualization}\label{sec:reasoning_vis}
In this section, we present a case study comparing our proposed MoDES with previous SOTA methods~\cite{bai2025diep,huang2024mixture} for LLMs. As shown in Figs.~\ref{fig:vis_exp_2} and \ref{fig:vis_exp}, MoDES consistently generates text that far outperforms the baselines.

\section{Ablation for \texorpdfstring{$N$}{N}}\label{sec:ablation_n}

\begin{table}[!ht]\setlength{\tabcolsep}{5.5pt}
\vspace{-0.1in}
 \renewcommand{\arraystretch}{1.}
  \centering
  \caption{Ablation results for $N$.} 
  \vspace{-0.1in}
  \resizebox{0.88\linewidth}{!}{
  \begin{tabu}[t!]{l|cccccc}
\toprule
$N$ & \rotatebox{0}{ChartQA} & \rotatebox{0}{MME} & MMBench  & \rotatebox{0}{LVB} & \rotatebox{0}{VMMMU} \\
\midrule
\multicolumn{7}{c}{\cellcolor[gray]{0.92} \textit{Skip $67\%$ Experts} ($\rho=0.65$)} \\
\midrule
2048 & \textbf{88.32} & 2201 & \textbf{82.79} & \textbf{62.92} & \textbf{48.89} \\
\rowcolor{mycolor!30}1024 (\textit{Ours}) & \underline{88.24} & \textbf{2204} & \underline{82.73} & \underline{62.90} & \underline{48.78} \\
512 & 87.44 & 2122 & 81.27 & 61.95 & 47.68 \\
256 & 85.56 & 2085 & 79.68 & 60.63 & 45.11\\

\midrule
\multicolumn{7}{c}{\cellcolor[gray]{0.92} \textit{Skip $83\%$ Experts} ($\rho=0.80$)} \\
\midrule

2048 & \textbf{84.84} & \textbf{2186} & \textbf{81.45} & \textbf{62.63} & \underline{46.67}\\
\rowcolor{mycolor!30}1024 (\textit{Ours}) & \underline{84.20} & \underline{2162} & \underline{81.44} & \underline{62.60} & \textbf{47.11} \\
512 & 84.12 & 2118 & 80.27 & 61.88 & 46.85 \\
256 & 83.35 & 2016 & 77.48 & 59.84 & 43.69 \\

\bottomrule
\end{tabu}
}
    \label{tab:ablation_n}
    \vspace{-0.1in}
\end{table}
We apply MoDES to Kimi-VL-A3B-Instruct~\cite{team2025kimi} using different numbers of data samples from
GQA~\cite{hudsom2019gqa} and show the results in Tab.~\ref{tab:ablation_n}. The results indicate a clear trend: With more calibration samples, models using \textit{expert skipping} perform better. Yet the accuracy gains become smaller as the sample count grows. Moreover, doubling the samples increases both calibration and search time by $\sim$2$\times$. To balance accuracy and cost, we use 1024 samples in this paper. This choice provides most of the achievable gains while keeping computation reasonable (Sec.~\ref{sec:efficiency}).

\section{Ablation for \texorpdfstring{$D$}{D}}\label{sec:ablation_d}
\begin{table}[!ht]\setlength{\tabcolsep}{5.5pt}
\vspace{-0.1in}
 \renewcommand{\arraystretch}{1.}
  \centering
  \caption{Ablation results for $D$.} 
  \vspace{-0.1in}
  \resizebox{0.88\linewidth}{!}{
  \begin{tabu}[t!]{l|cccccc}
\toprule
$D$ & \rotatebox{0}{ChartQA} & \rotatebox{0}{MME} & MMBench  & \rotatebox{0}{LVB} & \rotatebox{0}{VMMMU} \\
\midrule
\multicolumn{7}{c}{\cellcolor[gray]{0.92} \textit{Skip $67\%$ Experts} ($\rho=0.65$)} \\
\midrule
200 & \underline{88.16} & \textbf{2219} & \textbf{82.78} & \textbf{62.94} & \underline{48.76} \\
\rowcolor{mycolor!30}100 (\textit{Ours}) & \textbf{88.24} & \underline{2204} & \underline{82.73} & \underline{62.90} & \textbf{48.78} \\
50 & 87.85 & 2178 & 81.76 & 62.21 & 47.89\\

\midrule
\multicolumn{7}{c}{\cellcolor[gray]{0.92} \textit{Skip $83\%$ Experts} ($\rho=0.80$)} \\
\midrule

200 & \textbf{84.78} & \textbf{2178} & \textbf{81.61} & 62.59 & 47.00\\
\rowcolor{mycolor!30}100 (\textit{Ours}) & \underline{84.20} & \underline{2162} & \underline{81.44} & \textbf{62.60} & \underline{47.11} \\
50 & 83.96 & 2143 & 80.68 & 62.47 & \textbf{47.15}\\

\bottomrule
\end{tabu}
}
    \label{tab:ablation_d}
    \vspace{-0.1in}
\end{table}
We ablate the number of grid points $D$ in the search space $\mathcal{B}$. As shown in Tab.~\ref{tab:ablation_d}, larger $D$ brings diminishing accuracy gains, so using a very fine grid (\emph{e.g.}, $D>100$) is unnecessary. The time cost also grows roughly linearly with $D$. Based on this trade-off, we set $D=100$ in this work.

\section{Expert Redundancy across Modalities}\label{sec:redundancy}
\begin{figure}[!ht]
\vspace{-0.1in}
   \centering
    \setlength{\abovecaptionskip}{0.2cm}
   \begin{minipage}[b]{\linewidth}
       \begin{minipage}[b]{0.326\linewidth}
            \centering
            \begin{subfigure}[tp!]{\textwidth}
            \centering
            \subcaption{ChartQA~\cite{masry2022chartqa}}
            \includegraphics[width=\linewidth]{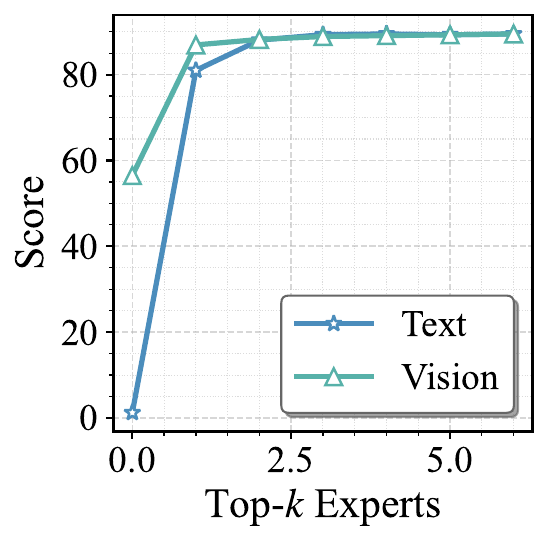}
            \end{subfigure}
       \end{minipage}
       \begin{minipage}[b]{0.326\linewidth}
            \centering
            \begin{subfigure}[tp!]{\linewidth}
            \centering
            \subcaption{MME~\cite{fu2023mme}}
            \includegraphics[width=\linewidth]{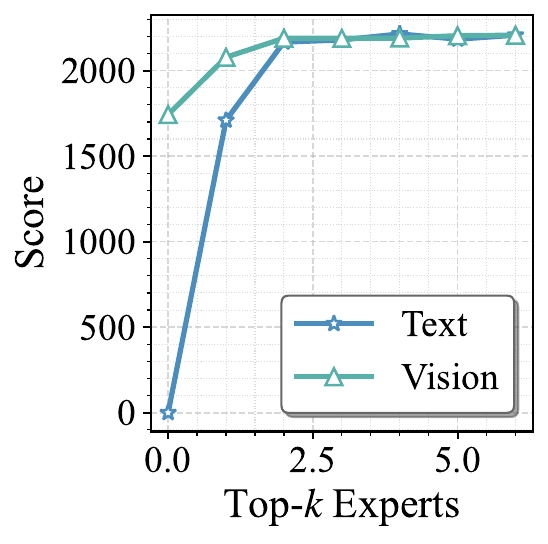}
            \end{subfigure}
       \end{minipage}
       \begin{minipage}[b]{0.326\linewidth}
            \centering
            \begin{subfigure}[tp!]{\linewidth}
            \centering
            \subcaption{VideoMMMU~\cite{hu2025videommmuevaluatingknowledgeacquisition}}
            \includegraphics[width=\linewidth]{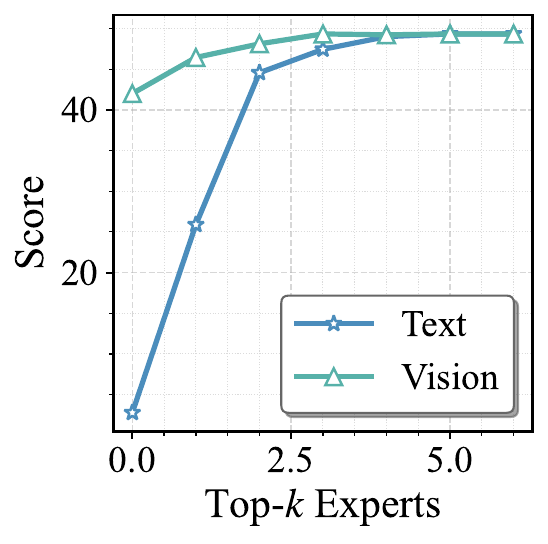}
            \end{subfigure}
       \end{minipage}
   \end{minipage}
   \vspace{-0.1in}
     \caption{Task performance across various numbers of top-$k$ routed experts applied to tokens of different modalities for Kimi-VL-A3B-Instruct~\cite{team2025kimi}.}
    \label{fig:expert_redundancy}
    \vspace{-0.15in}
\end{figure}

In this section, we analyze expert redundancy across modalities. As shown in Fig.~\ref{fig:expert_redundancy}, reducing $k$ for vision tokens causes task performance to drop more slowly than for text tokens. This indicates greater redundancy among experts for vision tokens, allowing more aggressive skipping than for text tokens. It also motivates modality-aware strategies for \textit{expert skipping}.

\end{document}